\newtheorem{theorem}{Theorem}
\newtheorem{lemma}{Lemma}
\newtheorem{definition}{Definition}
\newtheorem{proposition}{Proposition}
\newtheorem{assumption}{Assumption}
\newcommand{\NA}{\cellcolor{black!8}\textsc{N/A}}
\newcommand{\NR}{\textcolor{black!40}{--}}
\title{RELOOP: \underline{Re}cursive Retrieva\underline{l} with Multi-H\underline{o}p Reas\underline{o}ner and \underline{P}lanners for Heterogeneous QA}
\author{
 \textbf{Ruiyi Yang\textsuperscript{1}},
 \textbf{Hao Xue\textsuperscript{1,2}},
 \textbf{Imran Razzak\textsuperscript{1,3}},
\\
\textbf{Hakim Hacid\textsuperscript{4}},
 \textbf{Flora D. Salim\textsuperscript{1}},
\\
 \textsuperscript{1}University of New South Wales,
 \textsuperscript{2}The Hong Kong University of Science and Technology
(Guangzhou),
 \\
 \textsuperscript{3}Mohamed Bin Zayed University of Artificial Intelligence,
 \textsuperscript{4}Technology Innovation Institute,
\\
 \small{
   \textbf{Correspondence:} \href{mailto:ruiyi.yang@unsw.edu.au}{ruiyi.yang@unsw.edu.au}
 }
}
\begin{document}
\maketitle

\begin{abstract}
Retrieval-augmented generation (RAG) remains brittle on multi-step questions and heterogeneous evidence sources, trading accuracy against latency and token/tool budgets. This paper introduces \textbf{RELOOP}, a structure aware framework using \textbf{Hierarchical Sequence (HSEQ)} that (i) linearize documents, tables, and knowledge graphs into a reversible hierarchical sequence with lightweight structural tags, and (ii) perform structure-aware iteration to collect just-enough evidence before answer synthesis. A Head Agent provides guidance that leads retrieval, while an Iteration Agent selects and expands HSeq via structure-respecting actions (e.g., parent/child hops, table row/column neighbors, KG relations); Finally the head agent composes canonicalized evidence to genearte the final answer, with an optional refinement loop to resolve detected contradictions. Experiments on HotpotQA (text), HybridQA/TAT-QA (table+text), and MetaQA (KG) show consistent EM/F1 gains over strong single-pass, multi-hop, and agentic RAG baselines with high efficiency. Besides, RELOOP exhibits three key advantages: (1) a \textbf{format-agnostic unification} that enables a single policy to operate across text, tables, and KGs without per-dataset specialization; (2) \textbf{guided, budget-aware iteration} that reduces unnecessary hops, tool calls, and tokens while preserving accuracy; and (3) \textbf{evidence canonicalization for reliable QA}, improving answers consistency and auditability. Codes are available at https://anonymous.4open.science/r/RELOOP

\end{abstract}

\section{Introduction}

Large language models (LLMs), such as ChatGPT~\citep{achiam2023gpt}, LLaMA~\citep{dubey2024llama}, Falcon~\citep{zuo2025falcon}, have been increasingly relying on retrieval-augmented generation (RAG) to ground answers in external evidence. With reliable supplementary knowledge offered factual errors are reduced, especially in domain-specific questions, leading to higher accuracy and fewer hallucinations~\citep{zhu2021retrieving, gao2023retrieval, zhao2024retrieval}. Yet state-of-the-art pipelines, remain brittle on multi-step questions and heterogeneous sources, and still struggle to cope with the following challenges:

$\mathbf{C_1}:\textbf{Coverage in Single-pass Retrievers}$:
Single-pass pipelines (retrieve-$k$ then generate)~\citep{luo2023chatkbqa, glass2022re2g} focus on isolated retrieval and generation tasks. Although they can be setup and achieve data retrieval quickly, they struggle to trace complete evidence chains: dense retrievers, typically trained for pointwise recall and re-ranking, often lack path coverage; chunking heuristics fragment long documents and break discourse; long-context prompting shifts budget toward tokens irrelevant to the final answer and provides no explicit \emph{sufficiency} signal.

$\mathbf{C_2}:\textbf{Uncontrolled iteration and latency}$: With multi-agent collaboration and reasoning, agentic systems~\citep{liu2025hm, yang2025beyond, chen2025improving} easily explode the search space and can achieve multi-step reasoning. However they may fall with branchy plans, repeated web/file calls, and verbose chain-of-thought prompts, yielding unpredictable token/tool costs and latency; termination is often heuristic, leading to premature answers or extra wasted loops with budgets decoupled from the \emph{evidence actually inspected}~\citep{singh2025agentic}.

$\mathbf{C_3}:\textbf{Heterogeneity across formats}$: Free text, relational tables, and KGs typically require distinct indices, retrievers, prompt styles, and controller logic, preventing policy reuse and complicating training and deployment. Although existing heterogeneous RAG systems~\citep{yu2022retrieval, christmann2024rag} are available to deal with multiple formats of data, they may still face issues in either weak alignment across representations or lossy and non-reversible serialization that obscures provenance and blocks faithful reconstruction.

% \textbf{Hierarchical Sequence Iteration (HSEQ)} for Heterogeneous Question Answering introduces a reversible \emph{hierarchical sequence} interface that linearizes documents, tables, and KGs into a sequence of typed segments with lightweight structure (e.g., parent/child locality, offsets or coordinates, minimal schema/time tags). An iteration policy operates on this unified substrate using short, budgeted steps: at each step it selects a few promising segments and predicts whether the accumulated set is sufficient to answer. A concise \emph{guidance} plan—produced by a lightweight planner or a heuristic template—acts as a soft prior over which regions to probe first and when to stop. Once sufficiency is predicted, the selected segments are canonicalized into a compact, provenance-preserving package consumed by a head module to produce the final answer; an optional verifier can trigger a brief refinement if contradictions are detected.

To address above issues, this paper introduces \textbf{RELOOP}, a \textbf{Hierarchical Sequence Iteration System } that first recasts heterogeneous knowledge source into a \emph{single, LLM-native interface}, then turning retrieval into a \emph{guided, budget-aware iterative process}. The reversible structured parsing linearizes documents, tables, and KGs into a sequence of typed segments with lightweight structure (e.g., parent/child locality, offsets or coordinates, minimal schema/time tags).  An iteration policy operates on this unified substrate using short, budgeted steps: at each step it selects a few promising segments and predicts whether the accumulated set is sufficient to answer. A concise \emph{guidance} plan—produced by a lightweight planner or a heuristic template—acts as a soft prior over which regions to probe first and when to stop. Once sufficiency is predicted, the selected segments are canonicalized into a compact, provenance-preserving package consumed by a head module to produce the final answer; an optional verifier can trigger a brief refinement if contradictions are detected. pecifically, our \textbf{key contributions} are as followed:

\begin{itemize}
\item \textbf{Unified, reversible structured parsing.} A hierarchical sequence representation\textbf{(HSEQ)} that standardizes text, tables, and KGs with lightweight structure and source, enabling a single parser to operate across formats.
\item \textbf{Guided, budget-aware iteration.} A learned selection policy with an explicit sufficiency signal that concentrates computation on \emph{evidence actually inspected}, delivering predictable latency under token/tool budgets.
\item \textbf{Canonicalized evidence for reliable QA.} A compact, provenance-preserving evidence package that improves answer synthesis and auditability, and supports optional contradiction-driven refinement.
\end{itemize}

\section{Related Work}

\paragraph{LLM Finetuning}

Large Language Models (LLMs) often adopt finetuning to unlock their capabilities for downstream applications, like medical~\citep{goyal2024healai}, economic~\cite{guo2024econnli}, or human activity recognition~\cite{li2024sensorllm}. To enhance finetuning efficiency, methods like quantization~\citep{dettmers2022gpt3} parameter efficient fine tuning~\citep{hu2022lora, dettmers2023qlora, li2021prefix} can be applied.

\paragraph{Retrieval Augmented Generation}

RAG systems help LLMs retrieve extra knowledge according to queries and thereby improving the accuracy of LLM response~\citep{fan2024survey}, with no necessity to finetune the model. External databases ensure knowledge offered is domain-specific and timely, adding reliability and interpretability~\citep{lewis2020retrieval, jiang2023active}. \textbf{Accuracy} of knowledge retrieval and \textbf{quality} of responses are two key factors for RAG systems evaluation~\citep{yu2024evaluation}. Apart from text, table, or html sources~\citep{guo2024lightrag, chan2024rq, jin2025flashrag}, recent researches have combined graph-structured data into RAG systems(GraphRAG) to improve the efficiency of knowledge interpretability by capturing relationships between entities and utilizing triplets as the primary data source~\citep{edge2024local,peng2024graph,hu2024grag, mavromatis2024gnn}. 

\paragraph{Multi Agent QA system}

LLM-based Multi-Agent Systems (MASs) enable groups of intelligent agents to coordinate and solve complex tasks collectively at scale, transitioning from isolated models to collaboration-centric approaches~\citep{tran2025multi}. Agents can cooperate with each other for tasks like code generation~\citep{hong2024metagpt, islam2024mapcoder}, decision making~\citep{nascimento2023self, shinn2023reflexion}, while competitions among agents are appiled on gaming environment~\cite{wang2022cooperative} or question answering~\citep{puerto2021metaqa}. By interacting with each other, the system can be used for both problem solving or world simulation~\citep{guo2024large}

\paragraph{Structural and unified RAG interfaces.}
Beyond standard text-centric RAG, a line of work introduces \emph{structural} or \emph{unified} retrieval layers. Graph-based RAG systems construct heterogeneous or chunk-level graphs where nodes represent passages, entities, or sections, and edges encode semantic or hyperlink connectivity; retrieval then propagates over this graph to improve multi-hop reasoning and global coverage~\citep{wu2024medical, huang2025ket, luo2025gfm}. Other systems build hierarchical or modular indices over mixed document formats, or define unified data schemas for training language agents and tools, but still operate over opaque contexts at inference time~\citep{reynolds2024improving, liu2025hm, chen2024hiqa}. 

% These approaches share the intuition that adding structure on top of unstructured text helps reasoning, but typically (i) collapse different modalities into a single graph or index without a \emph{reversible}, modality-preserving representation; (ii) use structure primarily for neighborhood expansion or re-ranking rather than as a generic segment schema with explicit level, parent, and alignment fields; and (iii) do not provide formal guarantees on faithful reconstruction or budgeted selection cost under an explicit sufficiency-based stopping rule. 

While existing RAG-based methods still suffered from limitation mentioned above, there is a rising need for RAG interfaces that (i) preserve modality-specific structure in a \emph{reversible} way rather than collapsing all sources into an opaque graph or index; (ii) expose a generic, LLM-native segment schema with explicit level, parent, and alignment fields so that a single controller can navigate text, tables, and KGs uniformly; and (iii) couple this interface with an explicit sufficiency-aware, budget-controlled selection process, so that evidence gathering is both auditable and aligned with resource constraints. \textbf{RELOOP} is designed to meet these requirements by treating all sources as typed, provenance-aware segments in a hierarchical sequence and pairing this representation with a learned sufficiency head and budget-aware iteration policy.

\section{RELOOP: A Multi-Agent Heterogeneous Question Answering Framework}
\label{sec:framework-overview}

\begin{figure*}[t]
\centering
\includegraphics[width=\textwidth]{./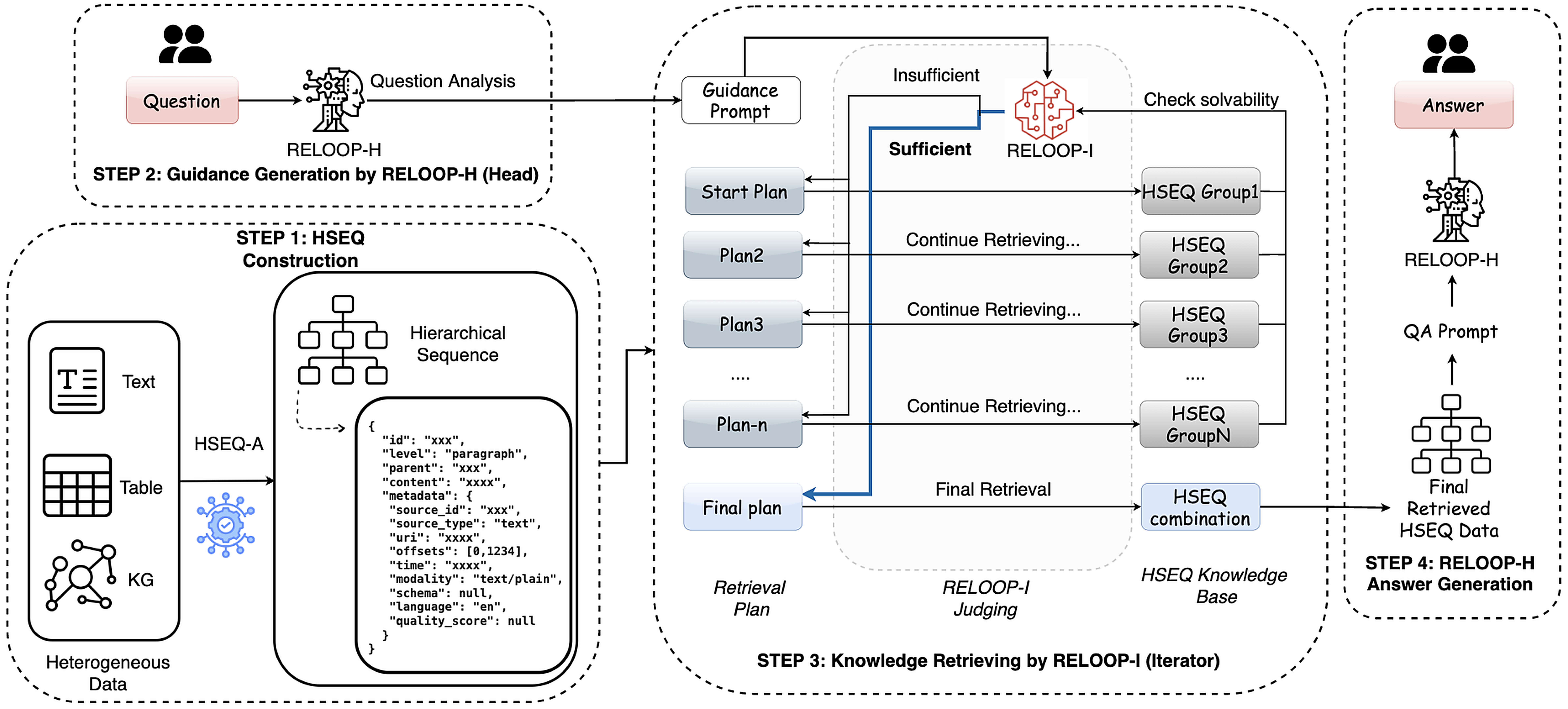}
\caption{RELOOP overview. (\emph{i}) HSEQ-A linearizes heterogeneous sources into HSEQ $S_h$ with level tags, parent pointers, and standardized metadata; (\emph{ii}) RELOOP-I iterates over a windowed stream of segments under budgets, guided by $g$, and queries $\Phi$ for sufficiency; (\emph{iii}) $\kappa$ compacts $M_t$ into provenance-preserving evidence; (\emph{iv}) RELOOP-H produces the final answer and optionally triggers a brief refinement if inconsistencies are detected.}
\label{fig:hseq-framework}
\end{figure*}

\subsection{Background and Setup}
\label{sec:background-setup}

\paragraph{Heterogeneous QA with budgets.}
Given a natural-language query $q$ and a heterogeneous corpus $D=\{(x_j,m_j)\}_{j=1}^N$ with modality $m_j\in\{\texttt{text},\texttt{table},\texttt{kg}\}$, the goal is to produce an answer $y\in\mathcal{Y}$ and optional supporting evidence $E\subseteq D$ while satisfying resource budgets $B$ (tokens, tool calls, latency, steps). Let $E^\star$ denote a \emph{minimally sufficient} evidence set for $q$ in $D$ under a fixed answerer.

\paragraph{From retrieval to guided iteration.}
We recast retrieval as a short sequence of structure-aware selections under an explicit sufficiency criterion. A modality-aware adapter $\tau$ converts $D$ into a single hierarchical sequence $S_h=\tau(D)$. A learned iteration policy $\pi_\theta$ interacts with $(q,S_h)$ to accumulate a compact evidence set $M^\star$ under budgets $B$, guided by a concise plan $g$. A canonicalizer $\kappa$ packages $M^\star$ for a head module $\mathcal{H}$, which produces the final answer. This preserves the familiar RAG workflow while adding a principled stopping signal and a unified interface across modalities.

\subsection{RELOOP Architecture}
The proposed system couples a unified \emph{hierarchical sequence (HSEQ)}
representation with an iteration policy and a head module $\mathcal{H}$ for
answer synthesis. As described in Section~\ref{sec:background-setup}, a
modality-aware adapter $\tau$ converts $D$ into a single hierarchical sequence
$S_h$, and the iterator $\pi_\theta$ accumulates evidence under budgets $B$, with framework as:
\begin{equation}
\label{eq:framework}
F = \big(\tau, \pi_\theta, \Phi, \kappa, \mathcal{H}\big),\quad
F(q,D) \;=\; \mathcal{H}\!\left(q,\, \kappa\!\left(M^\star\right)\right),
\end{equation}
where $\Phi$ denotes a budget-aware sufficiency criterion used by $\pi_\theta$,
and $\kappa$ canonicalizes the final evidence $M^\star$ for $\mathcal{H}$.

During iteration, $\pi_\theta$ selects and expands segments on $S_h$ under the
budget state $B_t$ and stops when evidence is deemed sufficient by $\Phi$ or the
budget is exhausted:
\begin{equation}
\label{eq:iter}
S_h = \tau(D),\qquad
M^\star \;=\; \pi_\theta\big(q,S_h;\,\Phi,\,B\big).
\end{equation}

After the iteration, $\kappa$ maps raw segments $M^\star$ to a normalized evidence package consumable by $\mathcal{H}$. The same policy $\pi_\theta$ is shared across modalities due to the common interface of $S_h$.

Generally, to achieve iteration through an unified data structure building from heterogeneous data sources, the RELOOP framework consists of three key modules: HSEQ-Adapter (HSEQ-A), RELOOP-Iterator (RELOOP-I), and RELOOP-Head (RELOOP-H).

\subsection{HSEQ-Adapter(HSEQ-A)}
\label{sec:hseq-a}

The HSEQ-Adapter is build to produce unified structure(\emph{HSEQ} $S_h$) that exposes locality (parent/child), alignment (span or coordinates), and lightweight semantics (time, schema, language) in a modality-agnostic format, while remaining \emph{reconstructable}. Formally, each item $x_j$ is mapped by a modality-specific adapter $\tau_{m_j}$ to a finite set of segments $\tau_{m_j}(x_j)\subset\mathcal{S}$ and then concatenated as 
\( S_h = \bigsqcup_{j=1}^N \tau_{m_j}(x_j) \in \mathcal{S}^\ast \),
where each \( s \in \mathcal{S} \) is represented as
\( s = (\mathrm{id}(s), \ell(s), p(s), c(s), \mu(s)) \).
% \begin{equation}
% S_h \;=\; \bigsqcup_{j=1}^N \tau_{m_j}(x_j) \;\in\; \mathcal{S}^\ast,
% \quad
% \mathcal{S} \ni s \;=\; \big(\mathrm{id}(s),\, \ell(s),\, p(s),\, c(s),\, \mu(s)\big).
% \label{eq:hseq_concat}
% \end{equation}

$\ell(s)$ is a level tag matching the raw content, including sentence, paragraph, table, triplet, etc., while $p(s)$ is a parent pointer recording the roots. $c(s)$ is human-readable content, and $\mu(s)$ is metadata with fixed keys to record content attributes.

% \begin{figure}[h]
%   \centering
%   \includegraphics[width=\columnwidth]{./figure/Hseq_construction.pdf}
%   \caption{HSEQ $S_h$ construction: Different modalities of data are unified by HSEQ-A}
%   \label{fig:hseq-prep}
% \end{figure}

% The single, modality-aware adapter converts heterogeneous sources into a common sequence of \emph{hierarchical segments}. After the construction, each segment is a lightweight record $ s=\big(\textit{id,\,level,\,parent,\,content,\,metadata}\big)$,
% where \textit{level} marks granularity (e.g., \textit{document}/\textit{paragraph}/\textit{sentence}, \textit{table\_row}/\textit{table\_cell}, \textit{triplet}/\textit{subgraph}), \textit{parent} keeps locality via a unique container pointer, \textit{content} is a compact human-readable summary (text span, serialized row, or compact triple), and \textit{metadata} standardizes provenance and alignment with fixed keys (e.g., \textit{source\_id}, \textit{uri}, \textit{offsets}, \textit{schema}, \textit{time}). Segments are concatenated into a final usable $S_h$ in parent-before-child order. This minimal contract enables structure-aware neighborhoods and budget-aware iteration without inspecting raw files.

Segments are concatenated into a final usable $S_h$ in parent-before-child order.
This minimal contract enables structure-aware neighborhoods and budget-aware iteration without inspecting raw files.

\subsection{RELOOP-Iterator(RELOOP-I)}
\label{sec:hseq-i}

% After HSEQ $S_h$ are build, the HSEQ-Iterator $\pi_\theta$ can be used on $(q,S_h)$ and maintains an evolving evidence set $M_t$ regarding question $q$. 

Following details how the iterator $\pi_\theta$ is guided by $g$ and controlled by windowing, neighborhood expansion, and a sufficiency-aware stopping signal.

\paragraph{Guidance prior.}
A short guidance $g = g(q,\mathrm{type})$ is treated as a \emph{prior} over iterator actions. $g$ is generated before each episode to shape exploration on $S_h$. This guidance can come from directly from head agent $\mathcal{H}$, or from heuristic templates keyed by $\mathrm{type}$.

% \begin{figure}[t]
% \centering
% \includegraphics[width=\textwidth]{./figure/HSEQ_prep.pdf}
% \caption{HSEQ data preparation and HSEQ-I finetuning.}
% \label{fig:hseq-prep}
% \end{figure}

\paragraph{Iteration control.}
Let $M_t\subseteq S_h$ denote the selected evidence at step $t$, $C_t\subseteq S_h$ a candidate window obeying a budget state $B_t$, and $\mathcal{N}(\cdot)$ the structure-aware neighborhood operators induced by levels, parents, and coordinates. The RELOOP-I module $\pi_\theta$ functions each step following the policy
\[
\pi_\theta(a_t, s_t \mid q,\,S_h,\,M_t,\,C_t,\,g,\,B_t),
\]
which emits an action $a_t$ (e.g., selecting up to $k$ segments from $C_t$ and/or expanding via $\mathcal{N}$) and a sufficiency prediction $s_t\in\{0,1\}$. A deterministic ordering $\rho$ over $S_h$ (e.g., paragraph $\prec$ row $\prec$ sentence $\prec$ triplet) defines the stream exposed to the policy. State evolves via a deterministic update $M_{t+1}=u(M_t,a_t)$ and $C_{t+1}=\mathrm{window}(S_h,M_{t+1},B_{t+1}, \rho)$. Termination occurs at $\tau=\min\{t:\,s_t=1\}$ or when the budget is exhausted.

With set window size $W$ and step cap $T_{\max}$, the algorithm can be described as Alg.~\ref{alg:guided-selection}, where the \emph{Refresh} operator advances the window while removing already selected segments, keeping the per-step context bounded independent of corpus size.

\paragraph{One-step worked example.}
Suppose $\rho$ is \texttt{paragraph} $\prec$ \texttt{row} $\prec$ \texttt{sentence} $\prec$ \texttt{triplet} and $W\!=\!5$. At $t\!=\!0$, $C_0$ is the first $5$ segments under $\rho$. The policy selects $K_1\!\subseteq\!C_0$ ($|K_1|\!\le\!k$) and optionally expands with $\mathcal{N}_{\text{children}}$ (to get sentences within a paragraph) or $\mathcal{N}_{\text{row}}$ (to fetch a full table row when a cell is promising). Then $M_1\!=\!M_0\cup K_1$. $\mathrm{Refresh}$ advances the window to the next $5$ \emph{unseen} segments in $S_h$. If $\Phi$ deems evidence sufficient ($s_1\!=\!1$) and $t\!\ge\!T_{\min}$, iteration halts; otherwise proceed to $t\!=\!2$ with updated $B_t$.

\subsection{RELOOP-Head (RELOOP-H).}
\label{sec:hseq-h}

The RELOOP-Head module $\mathcal{H}$ can be used in two parts: 1) Guiding the retrieval for RELOOP-I; and 2) Generating final conclusion regarding the question.

\paragraph{Guidance Generation.}
Although heuristic templates can be used, regarding an incoming question, $\mathcal{H}$ is available to be called first to analysis the content, generating guidance including: 1) Initial Retrieval Plan; 2) What information may be needed; 3) Potential conditions to stop.

\paragraph{Answer synthesis and optional refinement.}
Upon termination at step $\tau$, the canonicalizer $\kappa$ converts $M_\tau$ into a compact, provenance-preserving evidence package(ids, levels, offsets/coordinates, short snippets). The head module $\mathcal{H}$ then produces the final prediction:
\[
\hat{y}=\mathcal{H}\big(q,\,\kappa(M_\tau)\big).
\]
An optional verifier $\xi$ inspects $\kappa(M_\tau)$ for contradictions; if detected, a brief refinement pass (at most $\Delta$ additional steps) resumes iteration in Alg.~\ref{alg:guided-selection} with tightened guidance $g'$ and reduced budget $B'$.

\begin{algorithm}[H]
\caption{Guided Iterative Selection under RELOOP-I}
\label{alg:guided-selection}
\begin{algorithmic}[1]
\Require question $q$, HSEQ $S_h$, guidance $g$, budget $B$, window size $W$, step cap $T_{\max}$, minimum steps $T_{\min}$, top-$k$ $k$, ordering $\rho$
\State $M_0 \gets \varnothing$;\quad $C_0 \gets \mathrm{Window}(S_h; W, B_0,\rho)$
\For{$t = 1$ to $T_{\max}$}
  \State  $\mathrm{Update}\, a_t$
  \State $(K_t, s_t) \xleftarrow{a_t} \pi_\theta(q, g, M_{t-1}, C_{t-1}; B_t)$ \Comment{$K_t \subseteq C_{t-1},\ |K_t|\le k$}
  \State $M_t \gets M_{t-1} \cup K_t$
  \State $C_t \gets \mathrm{Refresh}(S_h, M_t; W,\rho)$
  \If{$s_t = 1$ \textbf{and} $t \ge T_{\min}$} \State \textbf{break} \EndIf
  \State  $\mathrm{Update}\, B_t$
\EndFor
\State $\tau \gets t$; \quad \Return $\kappa(M_\tau)$
\end{algorithmic}
\end{algorithm}

% \section{HSEQ Properties and Advantages}
% \label{sec:properties-advantages}

% \paragraph{Format-agnostic unification.}
% HSEQ exposes an unified contract over text, tables, and KGs via lightweight structural tags and alignment metadata, so one policy $\pi_\theta$ operates across modalities without per-dataset specialization.

% \paragraph{Budget-aware efficiency.}
% Fixed knobs $(W,k,T_{\max})$ and budget $B$ bound per-step context and total iterations, making latency scale with \emph{evidence actually inspected} rather than corpus size; short guidance $g$ further steers early exploration.

% \paragraph{Targeted retrieval with portable priors.}
% Guidance $g(q,\mathrm{type})$ encodes a soft plan and a sufficiency notion that transfers across tasks, improving sample efficiency while preserving the autonomy of $\pi_\theta$.

% \paragraph{Reliable answers with auditability.}
% The canonicalizer $\kappa$ preserves provenance (segment ids, offsets/coordinates, sources). The head $\mathcal{H}$ answers only from this package, enabling transparent tracing and contradiction-driven refinement.

% \paragraph{Composability and robustness.}
% Local neighborhoods (parent/child, row/column, relation hops) map to a small, reusable action vocabulary; the control loop degrades gracefully when any modality is scarce or noisy.

% \paragraph{Simplicity of integration.}
% Adapters are dependency-light and reversible; no changes to tokenizers or positional encodings are required—structure is conveyed through serialized segment fields under a fixed prompt schema.

\section{RELOOP usage with Open-Source LLMs}
\label{sec:learning-usage}
With Section~\ref{sec:framework-overview} as the theoretical interface, this section details how to instantiate RELOOP with open-source LLMs.

\subsection{Fine-tuning RELOOP-I}
\label{sec:finetune}

\paragraph{Training tuples and supervision.}
We organize supervision as tuples $(q,\mathrm{type},S_h,A^\star)$. Besides $q$ and $S_h$, an optional label $\mathrm{type}$ is added. The trajectory $A^\star=\{(a_t^\star,s_t^\star)\}_{t=1}^{\tau^\star}$ contains an action and a binary sufficiency signal with $\tau^\star=\min\{t:\,s_t^\star=1\}$. When explicit trajectories are unavailable, \emph{weak positives} $P^\star\subseteq S_h$ are induced by high-precision matching between gold answers (or oracle spans) and segment content, optionally augmented by lexical overlap with $q$. A target action sequence is synthesized by greedily selecting from $P^\star$ under the budget (details in App.~\ref{app:weakpos}).

% \paragraph{Policy learning.}
% Let the step state be $(q, S_h, M_t, C_t, g, B_t)$.
% The policy $\pi_\theta$ is trained via supervised risk minimization with
% parameter-efficient adaptation of a base LLM.
% With teacher forcing for $t < \tau^\star$, the objective minimizes an
% action-matching loss $\ell_{\mathrm{act}}$ and a sufficiency-aware stopping loss
% $\ell_{\mathrm{stop}}$, as formalized in Eq.~\ref{eq:objective},
% where $\lambda > 0$ trades off action accuracy and early stopping.
% Here, $\mathrm{state}_t = (q, S_h, M_t, C_t, g, B_t)$.
% When $A^\star$ is synthesized from $P^\star$, per-step weights attenuate
% low-confidence choices to reduce label noise
% (App.~\ref{app:weakpos}).
% During experiments, Low-Rank Adaptation is used for finetuning~\citep{hu2022lora}
% (App.~\ref{app:lora}).

% \begin{equation}
% \begin{aligned}
% \min_{\theta}\quad 
% \mathbb{E}\Bigg[
% \sum_{t=1}^{\tau^\star}
% &\;\ell_{\mathrm{act}}\!\left(
% \pi_\theta(\cdot\mid \mathrm{state}_t),\, a_t^\star
% \right) \\
% &\;+\;
% \lambda\,
% \ell_{\mathrm{stop}}\!\left(
% \pi_\theta(\cdot\mid \mathrm{state}_t),\, s_t^\star
% \right)
% \Bigg].
% \end{aligned}
% \label{eq:objective}
% \end{equation}

% \subsection{HSEQ-H: Guidance and Answer Generation}
% \label{sec:hseq-head}

\paragraph{Policy learning.}
The policy $\pi_\theta$ is trained with parameter-efficient adaptation of a base
LLM using Low-Rank Adaptation (LoRA)~\citep{hu2022lora}
(App.~\ref{app:lora}).
With teacher forcing for $t < \tau^\star$, the objective minimizes an
action-matching loss $\ell_{\mathrm{act}}$ and a sufficiency-aware stopping loss
$\ell_{\mathrm{stop}}$, as formalized in Eq.~\ref{eq:objective},
where $\mathrm{state}_t = (q, S_h, M_t, C_t, g, B_t)$ and $\lambda > 0$ trades off
action accuracy and early stopping.
When $A^\star$ is synthesized from $P^\star$, per-step weights attenuate
low-confidence choices to reduce label noise (App.~\ref{app:weakpos}).

\begin{equation}
\begin{aligned}
\min_{\theta}\quad 
\mathbb{E}\Bigg[
\sum_{t=1}^{\tau^\star}
&\;\ell_{\mathrm{act}}\!\left(
\pi_\theta(\cdot\mid \mathrm{state}_t),\, a_t^\star
\right) \\
&\;+\;
\lambda\,
\ell_{\mathrm{stop}}\!\left(
\pi_\theta(\cdot\mid \mathrm{state}_t),\, s_t^\star
\right)
\Bigg].
\end{aligned}
\label{eq:objective}
\end{equation}

\begin{figure}[t]
  \includegraphics[width=\columnwidth]{./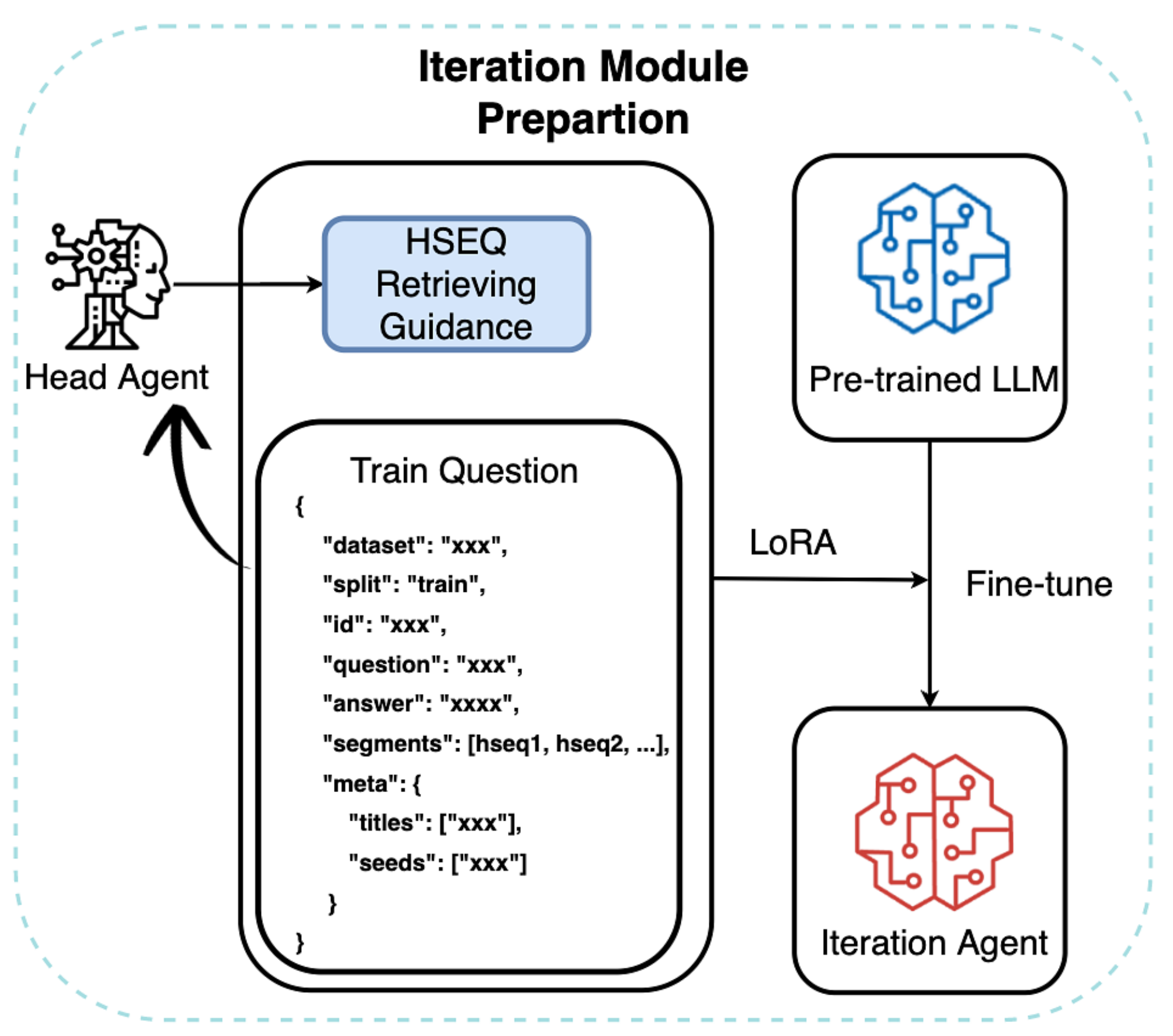}
  \caption{RELOOP-I is trained from multi-source questions. After guidance sets are prepared, LoRA is applied for finetuning.}
  \label{fig:hseq-i-train}
\end{figure}

\paragraph{Guidance generation (RELOOP-H).}
Given a question $q$ (and optional $\mathrm{type}$), RELOOP-H produces a short guidance $g$ that steers the iterator and specifies a stop rule. We use two modes: (i) a lightweight planner that drafts $g$ in 2–4 sentences; (ii) a heuristic template keyed by coarse question patterns (e.g., number/factoid/yes–no). $g$ follows a fixed structure: \emph{first-look targets} (entities/rows/1–2-hop neighbors), \emph{expansion rule} (parent/child, row/column, or relation hops), and \emph{stop rule} (e.g., “answer span/number is explicit and corroborated by $\ge1$ snippet”). $g$ is cached and reused; on a cache miss, we generate or fall back to the template. $g$ is a \emph{soft prior}—the iterator may override it when stronger signals appear.

\paragraph{Evidence-conditioned answering (RELOOP-H).}
After $\kappa$ compacts $M_\tau$ to $Z=\kappa(M_\tau)$ (snippets plus ids/levels/source and minimal offsets/coordinates), RELOOP-H performs evidence-conditioned answering:$\hat{y}\;=\;\mathcal{H}\!\big(q,\,Z;\,g\big)$
using a minimal prompt: \emph{answer only} (span/number/yes–no) grounded in $Z$, no chain-of-thought. When useful, we also return supporting ids from $Z$ for auditability. A lightweight entailment-style check over $Z$ may trigger a one-shot \emph{refinement}—the iterator resumes for a few steps under a tightened $g'$—otherwise $\hat{y}$ is emitted.

\section{Experiment}
RELOOP are evaluated on multiple QA datasets with a focus on both answer quality and efficiency. Metrics include Accuracy, F1, alongside efficiency indicators that reflect the \emph{evidence actually inspected}: average iteration steps and end-to-end latency (ms). 

\subsection{Experiment Setup: Benchmarks and Baselines}

\begin{table}
\vspace{-6pt}
\centering
\caption{Datasets used in our study (modality abbreviations: T=Text, Tb=Table, KG=Knowledge Graph). }
\label{tab:datasets}
\resizebox{\columnwidth}{!}{
\begin{tabular}{lcccc}
\toprule
\textbf{Dataset} & \textbf{Modality} & \textbf{\#Train} & \textbf{\#Validation}  & \textbf{\#Test} \\
\midrule
HotpotQA & T & 90447 & 7405 & 7405 \\
TAT-QA   & Tb + T & 13,251 & 1644 & 1,663 \\
HybridQA & Tb + T & 62,682 & 3,466 & 3463  \\
MetaQA-2Hop   & KG & 119,986 & 14,872 & 14,872  \\
MetaQA-3Hop   & KG & 17,482  & 14,274 & 14,274  \\
\bottomrule
\end{tabular}
}
\vspace{-4pt}
\end{table}

\paragraph{Benchmarks.}
To evaluate RELOOP usage from different data modalities, four benchmarks are used for experiments, stressing text-only, table–text hybrid, and KG reasoning: 
\emph{HotpotQA~\citep{yang2018hotpotqa}} (multi-hop reasoning over Wikipedia text), 
\emph{TAT-QA~\citep{zhu2021tat}} (table-centric financial QA with accompanying paragraphs and numerical operations), 
\emph{HybridQA~\citep{chen2020hybridqa}} (Wikipedia tables linked to passages requiring cross-format grounding), and 
\emph{MetaQA~\citep{zhang2018variational}} over a Wikidata-style KG (Since 1-hop variants are not emphasized due to triviality, during experiment only 2-hop and 3-hop questions are used for experiments).

\paragraph{Baselines.}
Three groups are divided as:
\begin{itemize}
\item \textbf{LLM-only QA.} Multiple LLMs are used to directly answers each question from raw knowledge source inputs, under same prompt instruction.

\item \textbf{RAG-based methods.} Since RELOOP explores different formats of data sources, RAG models specializing in separately \emph{Text}, \emph{Table} and \emph{Knowledge Graphs} have been tested. 

Specifically, for HybridQA and TAT-QA, \emph{TAT-LLM}~\citep{zhu2024tat}, \emph{TableRAG}~\citep{yu2025tablerag}, \emph{ODYSSEY}~\citep{agarwal2025hybrid}, \emph{TTQA-RS}~\citep{bardhan2024ttqa} and \emph{HippoRAG}~\citep{jimenez2024hipporag} are chosen for comparison. While for HotpotQA and MetaQA-2hop and 3hop, graph-centric RAG systems \emph{Graph-constrained Reasoning(GcR)}~\citep{luo2024graph}, \emph{Think on Graph (ToG)}~\citep{ma2024think} and \emph{AdaptiveRAG}~\citep{jeong2024adaptive} are set as baselines. Each is configured per its recommended settings for the corresponding modality.

\item \textbf{RELOOP (ours).} (i) The \emph{best} iteration–head pair results and (ii) The \emph{median} pair results over a grid of open-source models are provided. Three ablations are also included in experiments: (i) \emph{LLM-only }; (ii) \emph{RELOOP w/o SFT} (RELOOP-I not fine-tuned) (iii) \emph{without guidance from RELOOP-H}and (iii) \emph{with only guidance from template}.
\end{itemize}

\paragraph{RELOOP variants.}
For the \emph{iteration agent} (RELOOP-I) and the \emph{head agent} (RELOOP-H), different LLMs are finetuned and used, listed as:

\begin{tabularx}{\linewidth}{lX}
RELOOP-I: &
Falcon-H1-3B-Instruct; Qwen3-4B-Instruct-2507; 
DeepSeek-R1-Distill-Qwen-7B; Falcon3-3B-instruct; 
Falcon3-7B-instruct; Llama-3.2-3B-Instruct. \\
RELOOP-H: &
Falcon3-10B-instruct; Falcon-H1-7B-Instruct; 
Llama-3.1-8B-Instruct; DeepSeek-R1-Distill-Qwen-7B. \\
\end{tabularx}

Compatible pairs are swept and final “best” and “median” results across benchmarks are counted, with hyperparameters settings listed in App.~\ref{app:impl}.

\begin{table*}[h]
\centering
\caption{Overall QA performance on heterogeneous benchmarks. Shaded cells (\textsc{N/A}) indicate the method is not applicable to that benchmark; gray dashes (--) indicate metric not reported. The record results use Qwen3-4B-Instruct-2507 for HSEQ-I; and Falcon-H1-7B-Instruct for HSEQ-H}
\label{tab:main-results}
\resizebox{\textwidth}{!}{
\begin{tabular}{|l|cc|cc|cc|cc|cc|}
\hline
\multirow{2}{*}{Method} & \multicolumn{2}{c|}{HybridQA} & \multicolumn{2}{c|}{TAT-QA} & \multicolumn{2}{c|}{HotpotQA} & \multicolumn{2}{c|}{MetaQA-2hop} & \multicolumn{2}{c|}{MetaQA-3hop} \\
 & Acc & F1 & Acc & F1 & Acc & F1 & Acc & F1 & Acc & F1 \\
\hline
\rowcolor{black!10}\multicolumn{11}{|l|}{\textbf{LLM-only (direct QA)}}\\
Falcon3-10B-instruct     & 22.4 & \NR & 35.2 & \NR & 16.5 & \NR & 43.0 & \NR & 39.8 & \NR \\
Falcon-H1-7B-Instruct    & 32.9 & \NR & 43.7 & \NR & 21.1 & \NR & 48.3 & \NR & 44.6 & \NR \\
Llama-3.1-8B-Instruct    & 28.1 & \NR & 37.6 & \NR & 14.6 & \NR & 37.8 & \NR & 31.9 & \NR \\
Qwen3-4B-Instruct-2507   & 30.3 & \NR & 42.1 & \NR & 17.8 & \NR & 42.2 & \NR & 38.5 & \NR \\
\hline
\rowcolor{black!10}\multicolumn{11}{|l|}{\textbf{RAG-based methods (single-pass / agentic baselines)}}\\
TAT-LLM (Table task)                 & \NA  & \NA  & \underline{73.1} & \underline{81.0} & \NA  & \NA  & \NA  & \NA  & \NA  & \NA  \\
TableRAG   (Table task)                & 47.9 & \NR  & 61.9 & 68.6 & \NA  & \NA  & \NA  & \NA  & \NA  & \NA  \\
ODYSSEY    (Text task)             & 51.5 & 66.0 & \NA  & \NA  & \NA  & \NA  & \NA  & \NA  & \NA  & \NA  \\
TTQA-RS    (Text task)         & 62.3 & 70.6 & \NA  & \NA  & \NA  & \NA  & \NA  & \NA  & \NA  & \NA  \\
HippoRAG  (Table + text)              & \underline{65.8} & \textbf{72.4} & 70.1 & 74.9 & 53.2 & 55.7 & \NA  & \NA  & \NA  & \NA  \\
Graph-constrained Reasoning (GcR)
                         & \NA  & \NA  & \NA  & \NA  & 39.2 & 41.6 & 86.7 & 88.1 & 83.2 & 80.6 \\
Think on Graph (ToG)     & \NA  & \NA  & \NA  & \NA  & 43.1 & 44.7 & 83.2 & 84.8 & 81.1 & 78.5 \\
AdaptiveRAG  (Graph task)            & \NA  & \NA  & \NA  & \NA  & 50.3 & 52.5 & 88.2 & 90.1 & 84.5 & 85.7 \\
\hline
\rowcolor{black!10}\multicolumn{11}{|l|}{\textbf{Our method: RELOOP}}\\
RELOOP (best)              & \textbf{66.4} & \underline{72.1} & \textbf{75.7} & \textbf{83.5} & \textbf{56.3} & \textbf{58.6} & \textbf{95.9} & \textbf{91.1} & \textbf{93.4} & \textbf{88.3} \\
RELOOP (median)            & 63.9 & 70.8 & 73.2 & 79.6 & \underline{55.4} & \underline{57.1} & \underline{93.2} & \underline{89.7} & \underline{90.1} & \underline{86.6} \\
\hline
\end{tabular}
}
\end{table*}

\subsection{Experiment Result: How competitive is RELOOP with other baselines?}
Table~\ref{tab:main-results} summarizes answer quality across all datasets. RELOOP consistently improves over both LLM-only and strong RAG baselines, while using controlled iteration and exposing explicit provenance. Detailed per-model pairs are reported in Table~\ref{tab:acc-f1-eff}. Efficiency measurements (tokens/latency/steps) are in Table~\ref{tab:efficiency}.

Our RELOOP achieves strong and consistent gains on multiple benchmarks. On HotpotQA, MetaQA-2hop, and MetaQA-3hop, both the \emph{best} and \emph{median} RELOOP configurations surpass all baselines. On TAT-QA, RELOOP’s best run attains the top score overall, while the median run trails slightly behind TAT-LLM~\citep{zhu2024tat}. On the table-and-text HybridQA, RELOOP attains the best accuracy and the second-best F1 (just behind HippoRAG~\citep{jimenez2024hipporag}); the median configuration remains third in baselines.

\subsection{Yielding between efficiency and accuracy}

Table~\ref{tab:acc-f1-eff} in App.~\ref{app:var-llm-res} lists results using different RELOOP-I and HSEQ-A. The HybridQA results reveal a clear accuracy–efficiency trade-off across RELOOP agent pairs. The highest accuracy/F1 comes from Qwen3-4B (RELOOP-I) + Falcon-H1-7B (RELOOP-H) (66.2 / 71.4), with the \underline{second-best} Qwen3-4B + Llama-3.1-8B (65.5 / 71.2). These configurations, however, incur larger iteration depth and latency (about 3.7–4.1 steps; 16.5–21.5 second). On the efficiency end, Llama-3.2-3B + Llama-3.1-8B delivers the lowest steps and latency (2.11; 8.35k ms) with moderate accuracy (55.4 / 57.9), while Falcon3-3B + Falcon-H1-7B attains the \underline{second-best} efficiency (2.25; 11.7k ms) at similar quality. Taken together, the Pareto frontier spans (i) Qwen-based iterators with larger heads for top accuracy, and (ii) lightweight Llama/Falcon pairs for predictable low latency. Different agent pairs can be chosen regarding whether accuracy or budget dominates.

\subsection{Efficiency Analysis}
To test RELOOP framework's latency, \emph{evidence actually inspected} are calculated: iteration steps for RELOOP-I and wall-clock latency are calculated. Results are summarized below. “LLM-only” incurs a single forward pass (1 step) and thus the lowest raw latency,  but this comes at the cost of weaker multi-hop accuracy and no explicit provenance in Table~\ref{tab:acc-f1-eff}.  In contrast, graph-centric ToG performs many expansion steps (11–17 on average), which substantially increases latency (e.g., over $22$k ms on HotpotQA and $24$k ms on MetaQA-3hop), even though it is designed for multi-hop reasoning.

% HSEQ maintains short, budgeted loops (about 3–5 steps) with substantially lower latency than Graph-centric ToG while preserving multi-hop capability.

\begin{table*}[h]
\centering
\caption{Efficiency metrics on HotpotQA, MetaQA-2hop and MetaQA-3hop.}
\label{tab:efficiency}
\resizebox{\textwidth}{!}{
\begin{tabular}{|l|cc|cc|cc|}
\hline
\rowcolor{black!10}\multicolumn{7}{|l|}{\textbf{Efficiency}}\\
\hline
\multicolumn{1}{|c|}{Method} &
\multicolumn{2}{c|}{HotpotQA} &
\multicolumn{2}{c|}{MetaQA-2hop} &
\multicolumn{2}{c|}{MetaQA-3hop} \\
\cline{2-7}
& \multicolumn{1}{c|}{Steps} & \multicolumn{1}{c|}{Latency (ms) $\downarrow$}
& \multicolumn{1}{c|}{Steps} & \multicolumn{1}{c|}{Latency (ms) $\downarrow$}
& \multicolumn{1}{c|}{Steps} & \multicolumn{1}{c|}{Latency (ms) $\downarrow$} \\
\hline
LLM-only   & 1 & 3266.3 & 1 & 2556.4 & 1 & 3631.1 \\
Think on Graph (ToG)      & 13.28 & 22708.2 & 11.73 & 15707.6 & 16.58 & 24307.4    \\
RELOOP (ours, best)      & 4.00 & 6247.0 & 3.27 & 5732.2 & 4.11 & 10632.8  \\
RELOOP (ours, median)    & 4.17 & 12114.4 & 3.76 & 9480.1 & 4.59 & 13505.3 \\
\hline
\end{tabular}
}
\end{table*}

\begin{table*}[ht]
\centering
\caption{Ablations on benchmarks.}
\label{tab:ablation}
\resizebox{\textwidth}{!}{
\begin{tabular}{|l|cc|cc|cc|cc|cc|}
\hline
Variant & \multicolumn{2}{c|}{HybridQA} & \multicolumn{2}{c|}{TAT-QA} & \multicolumn{2}{c|}{HotpotQA} & \multicolumn{2}{c|}{MetaQA-3hop} & \multicolumn{2}{c|}{MetaQA-3hop} \\
 & Acc & F1 & Acc & F1 & Acc & F1 & Acc & F1 & Acc & F1\\
\hline
RELOOP (full) & \textbf{66.4} & \textbf{72.1} & \textbf{75.7} & \textbf{83.5} & \textbf{56.3} & \textbf{58.6} & \textbf{95.9} & \textbf{91.1} & \textbf{93.4} & \textbf{88.3} \\
\quad w/o SFT (base iteration) & 57.3 & 65.7 & 60.4 & 66.9 & 46.5 & 47.8 & 78.3 & 80.1 & 74.6 & 72.5 \\
\quad w/o guidance & 59.2 & 62.6 & 68.8 & 75.1 & 50.5 & 51.2 & 82.4 & 83.0 & 79.2 & 73.8 \\
\quad heuristic-only guidance & \underline{63.8} & \underline{67.3} & \underline{70.4} & \underline{79.9} & \underline{54.7} & \underline{56.1} & \underline{87.3} & \underline{85.4} & \underline{83.9} & \underline{86.1} \\
LLM-only & 32.9 & \NR & 43.7 & \NR & 21.1 & \NR & 48.3 & \NR & 44.6 & \NR \\
\hline
\end{tabular}
}
\end{table*}

RELOOP occupies a middle ground in this trade-off. Both the best and median RELOOP variants maintain short, budgeted loops of roughly $3$–$5$ steps across datasets, yet reduce latency by more than half relative to ToG on all three benchmarks. This indicates that guided, windowed iteration over RELOOP can retain multi-hop capability while avoiding the long expansion chains and repeated graph traversals of ToG. Compared with LLM-only, RELOOP pays a moderate overhead in latency but gains structured evidence and substantially higher accuracy on multi-step questions. RELOOP provides a more balanced operating point with bounded steps and competitive performance.

\subsection{Ablation Studies}
Ablation studies are set to evaluate each component of RELOOP framework on representative text (HotpotQA) and table-text (HybridQA) tasks. Following tasks are considered: (a) \textbf{No SFT} (RELOOP-I not fine-tuned); (b) \textbf{No guidance} (remove $g$); (c) \textbf{Heuristic-only guidance} (no planner) ; and (d) \textbf{LLM-only} (without multi-agent but use HSEQ as data input).

The ablation study demonstrates the necessities of all RELOOP's components, with differing sensitivity across formats. Using \emph{heuristic-only} guidance yields the smallest degradation from the full system—typically a modest drop in Acc/F1—indicating that a lightweight, template-style prior already guides RELOOP-I effectively when the planner is absent. Removing fine-tuning (\emph{w/o SFT}) causes a larger decline, but with the use of structured HSEQ data, accuracy remains substantially higher than \emph{LLM-only}. Without guidance (\emph{w/o guidance}) influence performance, as in prompt RELOOP-I is only asked to \textit{choose necessary evidence from below to answer the question}. The results underscore the role of guidance as a portable sufficiency prior. Finally, the \emph{LLM-only} setting performs worst across all benchmarks, reflecting the difficulty of recovering minimally sufficient evidence without iterative, structure-aware selection. Overall, the results suggest that (i) RELOOP’s unified data structure is the primary source of robustness, (ii) SFT RELOOP-I provides consistent gains, and (iii) guidance—even a simple heuristic ones from template-would increase overall accuracy strongly.

\section{Conclusion}

This paper introduces \textbf{RELOOP}, a compact framework for heterogeneous QA that (i) \emph{unifies} text, tables, and knowledge graphs into a reversible hierarchical \textbf{HSEQ} with lightweight structure and provenance; (ii) performs \emph{guided, budget-aware iteration} that selects small sets of salient segments and predicts \emph{sufficiency} for early stopping; and (iii) feeds a \emph{canonicalized evidence} package to a head module for answer synthesis. By replacing single-shot retrieval and unconstrained agentic loops with short, structure-aware selections equipped with an explicit sufficiency signal, RELOOP concentrates computation on \emph{evidence actually inspected}, delivers predictable latency under token/tool budgets, and preserves auditability through provenance-aware canonicalization. 

Across heterogeneous QA benchmarks, RELOOP achieves strong answer quality alongside consistent efficiency, revealing a controllable trade-off between accuracy and cost: larger head with finetuned small iterators achieved both fast and accurate QA. The unified sequence parsing and standardized action schema enable a single learned policy to operate across modalities reliably.

\section*{Limitations}

Although through experiments are done to prove RELOOP's universality under different modalities of data, more improvement can be added on the framework. \emph{Future work} will extend RELOOP to multi-turn/streaming settings with dynamic corpora, mitigate hallucination on sufficiency judge under noisy evidence.

% \section*{Acknowledgments}

% Bibliography entries for the entire Anthology, followed by custom entries
%\bibliography{anthology,custom}
% Custom bibliography entries only
\bibliography{custom}

\appendix

\section{Appendix}
\label{sec:appendix}

\subsection{Theoretical Properties of RELOOP}
\label{app:theory-formal}

\subsubsection{Preliminaries and Assumptions}

\paragraph{Segment schema.}
An HSEQ is a finite multiset $S_h$ of segments $s=(\mathrm{id}(s),\ell(s),p(s),c(s),\mu(s))$.
Here $\ell$ is a level tag; $p$ is a parent pointer with $p(s)=\bot$ if $s$ is a root; $c$ is content; $\mu$ is metadata, possibly including \texttt{offsets} (for text), \texttt{schema} and row indices (for tables), and triplet fields (for KGs).

\paragraph{Encoder/decoder.}
Let $\Phi$ map any finite corpus $X$ (text + tables + KG) to $S_h=\Phi(X)$, and let $\Psi$ map $S_h$ back to a corpus $\Psi(S_h)$. We assume the following modality-specific invariants are enforced by the adapters (they match the implementation but are stated abstractly).

\begin{description}
\item[(T1) Text offsets.] For each text item $x\in\Sigma^\ast$, if $s$ is a paragraph (resp.\ sentence) segment for a span $x[a:b)$ (resp.\ $x[u:v)$ inside a paragraph), then $\mu(s).\texttt{offsets}=[a,b]$ (resp.\ $[a+u,a+v]$), $c(s)=x[a:b)$ (resp.\ $x[a+u:a+v)$), and $p$ is the unique parent in the containment chain (sentence $\to$ paragraph $\to$ document).
\item[(T2) Table rows.] For a table with header $H=(h_1,\dots,h_C)$ and $n$ rows $(r_i)_{i=1}^n$, the table-root segment stores $H$ in $\mu(\cdot).\texttt{schema}$; each row-segment $s_i$ stores $c(s_i)=\mathrm{dict}(H\mapsto r_i)$ and either (a) an explicit row index $\mu(s_i).\texttt{offsets}=[i,-1]$, or (b) a total order on row segments consistent with the original row order.
\item[(T3) KG triples.] For a KG edge multiset $E\subseteq\mathcal{E}\times\mathcal{R}\times\mathcal{E}$ (optionally time-stamped), each edge $(h,r,t,\tau)$ corresponds to exactly one triplet segment $s$ with $c(s)=(h,r,t)$ and $\mu(s).\texttt{time}=\tau$; parent $p(s)$ is the unique subgraph-root for the neighborhood.
\end{description}

\paragraph{Benign equivalence.}
Define an equivalence relation $\equiv$ over corpora by (i) ignoring differences in text whitespace that do not change the sequence of non-whitespace characters; (ii) allowing a global column permutation $\pi\in S_C$ applied uniformly to the header and all row dictionaries of a table; (iii) treating KGs as edge multisets (edge order immaterial).

\paragraph{Ordering and window.}
Let $\rho$ be a total order over $S_h$ (e.g., paragraph $\prec$ row $\prec$ sentence $\prec$ triplet with a deterministic tie-break). The stream induced by $\rho$ lists $S_h$ as $(s_1,\dots,s_N)$. For a window size $W\in\mathbb{N}$, $\mathrm{Window}(S_h;W,\rho)$ returns the first $W$ items of the stream that are not already selected; $\mathrm{Refresh}(S_h,M;W,\rho)$ returns the next $W$ unseen items after removing $M$. Both are \emph{monotone} w.r.t.\ $\rho$: the sequence of items exposed across refreshes is exactly the $\rho$-stream with already selected items removed.

\paragraph{Admissibility.}
For a question $q$, a supporting set $E^\star\subseteq S_h$ is \emph{answer-supporting} if the head module $\mathcal{H}$ yields the correct answer when given only $E^\star$. An order $\rho$ is \emph{admissible} for $(q,S_h)$ if there exists a minimal $L\in\{1,\dots,|S_h|\}$ such that $E^\star\subseteq \{s_1,\dots,s_L\}$ for some answer-supporting $E^\star$.

\paragraph{Sufficiency predicate.}
Let $\mathsf{Suff}(M)$ be a predicate that holds iff $M$ contains some answer-supporting subset. We assume a calibrated sufficiency head: whenever $\mathsf{Suff}(M_t)$ becomes true, the policy can set its stop flag $s_t=1$ at that step or earlier.\footnote{This is standard in supervised setups where the stop head is trained to fire at first sufficiency (or with tolerance).}

\subsubsection{Faithful Linearization}

\begin{theorem}[Faithful linearization]
\label{thm:faithful-formal}
For any finite corpus $X$, under (T1)--(T3), the encoder $\Phi$ is injective up to $\equiv$, i.e., $\Psi(\Phi(X))\equiv X$.
\end{theorem}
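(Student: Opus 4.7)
The plan is to establish the reconstruction identity $\Psi(\Phi(X))\equiv X$ by a modality-wise decomposition, from which injectivity up to $\equiv$ follows immediately. The starting observation is that $S_h=\Phi(X)$ carries a canonical forest structure induced by the parent map $p(\cdot)$: every segment follows its unique parent chain to a root (where $p(s)=\bot$), and the level tag $\ell(\cdot)$ at that root determines the modality (\texttt{document}, \texttt{table}, or \texttt{subgraph}). Thus $S_h$ partitions into modality-disjoint rooted hierarchies, and the decoder $\Psi$ can be specified as the disjoint union of three modality-specific decoders, each acting on one hierarchy independently.

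Next I would verify each modality separately using the adapter invariants. For \textbf{text}, (T1) guarantees that paragraph segments carry intervals $\mu(s).\texttt{offsets}=[a,b]$ with $c(s)=x[a:b)$, while sentence segments nest within their parent with consistent offsets; sorting paragraphs by offset and concatenating $c(\cdot)$ reconstructs the character sequence up to ignorable whitespace outside of $c(s)$, which is precisely what $\equiv$ allows. For \textbf{tables}, (T2) stores the header $H$ in $\mu(\cdot).\texttt{schema}$ at the table root and each row as a dictionary keyed by $H$, so decoding recovers the table up to a uniform column permutation $\pi\in S_C$; since $\equiv$ quotients by such permutations, equivalence holds. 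Row order is recovered either from explicit indices $\mu(s_i).\texttt{offsets}=[i,-1]$ or from the mandated total order on row segments. For \textbf{KGs}, (T3) establishes a one-to-one correspondence between triplet segments and edges $(h,r,t,\tau)$, so the decoded edge multiset coincides with the original; since $\equiv$ treats KGs as multisets, edge order is immaterial.

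Finally, I would glue the three reconstructions together: $\Psi$ emits a disjoint union of decoded text documents, tables, and KGs, which by the per-modality arguments is $\equiv X$. Injectivity up to $\equiv$ follows as a corollary: if $\Phi(X_1)=\Phi(X_2)$, then $X_1\equiv\Psi(\Phi(X_1))=\Psi(\Phi(X_2))\equiv X_2$. The main obstacle I anticipate is establishing that the root-level partition of $S_h$ is well-defined and coincides exactly with the original modality partition of $X$, with no segment orphaned, double-counted, or attributed to the wrong hierarchy; this relies on $p$ being a function whose iterated application terminates at a root in finitely many steps, together with the assumption that root-level tags live in pairwise disjoint sets across modalities. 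A secondary subtlety is the column-permutation equivalence for tables — the decoder must apply $\pi$ \emph{uniformly} to both the schema and every row dictionary, which is immediate because (T2) stores rows as header-keyed dictionaries rather than positional tuples, so any relabeling of $H$ propagates consistently.
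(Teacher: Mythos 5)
Your proposal is correct and follows essentially the same route as the paper's proof: a modality-wise decomposition of $X$ into text, table, and KG parts, verification that each of (T1)--(T3) forces exact reconstruction modulo the corresponding clause of $\equiv$ (whitespace, column permutation, edge order), and gluing via disjoint support. The only additions beyond the paper's argument are the explicit corollary deriving injectivity from $\Psi\circ\Phi \equiv \mathrm{id}$ and the careful remark about the root-level partition being well-defined, both of which are sound refinements rather than a different approach.
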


\begin{proof}
Write $X=X_{\mathrm{text}}\uplus X_{\mathrm{tbl}}\uplus X_{\mathrm{kg}}$ and let $S_h=\Phi(X)$.
We show $\Psi(\Phi(\cdot))$ acts as identity modulo $\equiv$ on each modality and hence on their disjoint union.

\emph{Text.} Consider $x\in X_{\mathrm{text}}$. By (T1) each paragraph (resp.\ sentence) segment $s$ stores the exact substring $c(s)=x[a:b)$ (resp.\ $x[u':v')$) and absolute offsets in $\mu(s).\texttt{offsets}$. Let $S_x\subseteq S_h$ be all segments rooted at the document node of $x$. The decoder reconstructs $x'$ by placing every paragraph substring at its $[a,b)$ range and merging overlaps implied by sentence children; uniqueness of parents eliminates ambiguity. Because offsets are absolute and children are contained in parents by construction, the reconstructed $x'$ equals $x$ character-for-character; any whitespace normalization is permitted by $\equiv$.

\emph{Tables.} Let a table have header $H=(h_1,\dots,h_C)$ and rows $(r_i)_{i=1}^n$. By (T2), $\mu(\cdot).\texttt{schema}$ stores $H$, and each row segment $s_i$ stores the dictionary $c(s_i)$ mapping $H$ to the row tuple $r_i$, together with either an explicit row index or a total order consistent with the original order. The decoder reassembles the matrix $[H; r_1; \dots; r_n]$. Any global column permutation $\pi$ yields an equivalent table under $\equiv$; thus the reconstruction is unique modulo schema-order permutations.

\emph{KGs.} Let $E$ be the multiset of edges. By (T3), each edge $(h,r,t,\tau)$ corresponds bijectively to one triplet segment with $c(s)=(h,r,t)$ and $\mu(s).\texttt{time}=\tau$, and parentage is irrelevant to content. The decoder collects the multiset of triplets, which equals $E$; edge order is immaterial and thus fits $\equiv$.

Since the three reconstructions are independent and disjointly supported, $\Psi(\Phi(X))\equiv X$ follows.
\end{proof}

\subsubsection{Windowed Iteration: Coverage and Complexity}

Let $E^\star\subseteq \{s_1,\dots,s_L\}$ be an answer-supporting set with minimal prefix length $L$ under an admissible order $\rho$. Fix window $W\ge k\ge 1$ and define the iterative selection with refresh as in the main text.

\begin{lemma}[Prefix coverage under $k$-selection]
\label{lem:coverage-formal}
After $t$ steps, the selected set $M_t$ contains at least $\min\{kt,L\}$ items from the $\rho$-prefix $\{s_1,\dots,s_L\}$. In particular, $E^\star\subseteq M_T$ for $T=\lceil L/k\rceil$.
\end{lemma}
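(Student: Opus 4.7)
The plan is to prove the lemma by induction on $t$, using two structural facts already laid out: (a) admissibility of $\rho$, which places an answer-supporting $E^\star$ inside the $\rho$-prefix $P_L:=\{s_1,\dots,s_L\}$; and (b) monotonicity of the $\mathrm{Window}/\mathrm{Refresh}$ operators, which ensures that $C_t$ is always the first $W$ items of the $\rho$-stream after removing $M_t$, so the still-unseen prefix items occupy the initial positions of every refreshed stream.

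The base case $t=0$ is immediate since $M_0=\varnothing$ and $\min\{0,L\}=0$. For the inductive step, assume $|M_t\cap P_L|\ge \min\{kt,L\}$. If $kt\ge L$ the bound already saturates at $L$ and propagates through $M_{t+1}\supseteq M_t$. Otherwise let $r:=L-|M_t\cap P_L|\ge 1$; by (b) the first $r$ items of the refreshed stream are exactly the still-uncovered prefix items, and since $W\ge k$ the window $C_t$ contains at least $\min\{W,r\}\ge \min\{k,r\}$ of them. With $|K_{t+1}|\le k$ and a selection that takes prefix items when available, we obtain $|M_{t+1}\cap P_L|\ge |M_t\cap P_L|+\min\{k,r\}=\min\{k(t+1),L\}$, closing the induction. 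Setting $T=\lceil L/k\rceil$ yields $kT\ge L$, hence $|M_T\cap P_L|\ge L$, so $P_L\subseteq M_T$ and therefore $E^\star\subseteq M_T$.

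The main obstacle is the inductive step's assumption that, whenever prefix items appear in $C_t$, the policy actually includes them in $K_{t+1}$. This is not automatic from Alg.~\ref{alg:guided-selection} alone; it needs a \emph{prefix-preferring} selection hypothesis on $\pi_\theta$ — e.g.\ an oracle policy, a deterministic tie-break that picks the $\rho$-earliest $k$ candidates, or (in the learned case) a calibrated policy whose supervision in Sec.~\ref{sec:finetune} aligns top-$k$ scores with prefix membership. I would state this explicitly as a side-condition; without it the argument only yields the trivial upper bound $|M_t\cap P_L|\le kt$ and cannot certify completion at $T=\lceil L/k\rceil$. A fallback is to weaken the conclusion to an expectation or high-probability statement under a coverage-consistent training assumption, but the cleanest route is to add the prefix-preferring hypothesis and keep the deterministic guarantee.
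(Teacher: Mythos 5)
Your induction is essentially the paper's own proof: same invariant $|M_t\cap\{s_1,\dots,s_L\}|\ge\min\{kt,L\}$, same use of monotone \(\mathrm{Refresh}\) to place the unseen prefix items at the head of the window, same saturation argument. The side-condition you flag is a real one, and the paper's proof leaves it tacit: the phrase ``selecting $k$ new items \dots increases the count by at least $\min\{k,\,L-(t-1)k\}$'' silently assumes the policy actually takes the exposed prefix items rather than other items in a window of size $W>k$, which is exactly your prefix-preferring hypothesis. Your proposed fallback to a probabilistic guarantee is precisely what the paper does separately in Theorem~\ref{thm:stochastic} under Assumption~\ref{ass:pexp}, so your reading of where the deterministic claim ends and the stochastic one begins is accurate.
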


\begin{proof}
We prove by induction on $t\ge 0$ that $|M_t\cap \{s_1,\dots,s_L\}| \ge \min\{kt,L\}$.

Base $t=0$: $M_0=\varnothing$ so the bound is $0$.

Inductive step: assume the claim for $t-1$. At step $t$, the window exposes (by monotonicity of $\mathrm{Refresh}$) the earliest $W$ unseen items under $\rho$; hence at least the next $k$ unseen items in the prefix $\{s_1,\dots,s_L\}$ are eligible (because $W\ge k$). Selecting $k$ new items (or fewer if fewer remain in the prefix) increases the count by at least $\min\{k,\, L-(t-1)k\}$, giving $\min\{kt,L\}$. Once all $L$ prefix items are selected, the bound saturates at $L$.
\end{proof}

\begin{proposition}[Guaranteed halt]
\label{prop:halt-formal}
Assume a step cap $T_{\max}$ and a sufficiency head that can set $s_t=1$ whenever $\mathsf{Suff}(M_t)$ holds. Under admissibility, the control loop halts after at most $\min\{T_{\max},\lceil L/k\rceil\}$ steps.
\end{proposition}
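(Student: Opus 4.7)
The plan is to chain Lemma~\ref{lem:coverage-formal} with the calibration of the sufficiency head and the hard step cap, showing that whichever bound is tighter necessarily triggers termination. Concretely, I will argue that by step $T=\lceil L/k\rceil$ the selected set $M_T$ already contains a full answer-supporting subset $E^\star$, so $\mathsf{Suff}(M_T)$ must hold; calibration then forces $s_T=1$, and the \textbf{break} condition in Algorithm~\ref{alg:guided-selection} fires. Independently, the \textbf{for} loop cannot run past $T_{\max}$, so the stopping time $\tau$ is bounded by $\min\{T_{\max},\lceil L/k\rceil\}$.

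The first step is to invoke admissibility: pick the minimal prefix length $L$ and the witnessing $E^\star\subseteq\{s_1,\dots,s_L\}$ from the definition. Apply Lemma~\ref{lem:coverage-formal} with $t=T=\lceil L/k\rceil$; the coverage bound saturates at $L$, and since $E^\star$ lies inside that prefix, \emph{every} element of $E^\star$ is eventually selected, giving $E^\star\subseteq M_T$. Here I may need to be careful that the lemma only guarantees prefix coverage, not that the specific elements of $E^\star$ are chosen first; however, because coverage saturates at $L$ by step $T$, the entire prefix (and hence $E^\star$) is contained in $M_T$.

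Next I would translate coverage into sufficiency: by the definition of the predicate $\mathsf{Suff}$, containing an answer-supporting subset implies $\mathsf{Suff}(M_T)=1$. The calibration assumption on the sufficiency head (``whenever $\mathsf{Suff}(M_t)$ becomes true, the policy can set $s_t=1$ at that step or earlier'') then ensures that $s_{t^\ast}=1$ for some $t^\ast\le T$. Combined with the explicit bound $\tau\le T_{\max}$ from the loop structure, the halting time is at most $\min\{T_{\max},\lceil L/k\rceil\}$.

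The main obstacle I anticipate is purely a matter of precision rather than depth: making the ``at that step or earlier'' clause rigorous, and reconciling it with the minimum-step guard $T_{\min}$ in the algorithm. If $T_{\min}>\lceil L/k\rceil$, the loop may be forced to continue past the first sufficiency, so the bound should really read $\min\{T_{\max},\max\{T_{\min},\lceil L/k\rceil\}\}$; I would either fold $T_{\min}$ into the proposition's hypotheses (assume $T_{\min}\le \lceil L/k\rceil$) or state the refined bound explicitly. Everything else is a direct composition of Lemma~\ref{lem:coverage-formal}, the admissibility definition, and the calibrated-stop hypothesis.
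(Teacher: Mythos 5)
Your proof is correct and follows essentially the same route as the paper's: apply Lemma~\ref{lem:coverage-formal} at $t=\lceil L/k\rceil$ to get $E^\star\subseteq M_T$, conclude $\mathsf{Suff}(M_T)$ holds so the calibrated stop head fires by $T$, and note the hard cap $T_{\max}$ independently bounds the loop. Your observation about the $T_{\min}$ guard in Algorithm~\ref{alg:guided-selection} is a legitimate refinement that the paper's own proof silently omits (implicitly assuming $T_{\min}\le\lceil L/k\rceil$), but it does not change the substance of the argument.
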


\begin{proof}
By Lemma~\ref{lem:coverage-formal}, after $T=\lceil L/k\rceil$ steps, $E^\star\subseteq M_T$; hence $\mathsf{Suff}(M_T)$ holds and the stop head can fire at or before $T$. Independently, the hard cap $T_{\max}$ forces termination by $T_{\max}$ steps. Therefore $\tau\le \min\{T_{\max},T\}$.
\label{proof:halt-formal}
\end{proof}

\begin{theorem}[Budgeted selection complexity]
\label{thm:budget-formal}
Let $C(W)>0$ be the (deterministic) per-step context cost determined by window size $W$. Under admissibility, the total selection cost is bounded by
\(
\mathrm{Cost}_{\mathrm{select}}\ \le\ C(W)\cdot \min\{T_{\max},\lceil L/k\rceil\},
\)
independent of $|S_h|$. If $L$ is a nonnegative integer random variable with $\mathbb{E}[L]=\bar{L}<\infty$, then
\(
\mathbb{E}\big[\mathrm{Cost}_{\mathrm{select}}\big]\ \le\ C(W)\cdot \mathbb{E}\!\left[\min\{T_{\max},\lceil L/k\rceil\}\right]
\ \le\ C(W)\cdot \min\{T_{\max},\, \bar{L}/k+1\}.
\)
\end{theorem}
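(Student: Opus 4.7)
The plan is to chain together the deterministic halt bound from Proposition~\ref{prop:halt-formal} with a per-step cost accounting, and then take expectations using a standard ceiling estimate. Since the iteration performs at most one context evaluation per step at cost $C(W)$, and $C(W)$ depends only on $W$ (not on $|S_h|$) by construction of $\mathrm{Window}$ and $\mathrm{Refresh}$, the total selection cost equals $C(W)\cdot \tau$ where $\tau$ is the step at which the loop halts.

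First I would invoke Proposition~\ref{prop:halt-formal} under admissibility to obtain the pointwise inequality $\tau \le \min\{T_{\max},\lceil L/k\rceil\}$. Multiplying through by the nonnegative constant $C(W)$ yields
\[
\mathrm{Cost}_{\mathrm{select}} \;=\; C(W)\cdot \tau \;\le\; C(W)\cdot \min\{T_{\max},\lceil L/k\rceil\},
\]
which is the first claim. The independence from $|S_h|$ is immediate since neither side references the corpus size; this is exactly the point of the windowed refresh design.

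For the random case, I would take expectations of the deterministic inequality and use monotonicity to get $\mathbb{E}[\mathrm{Cost}_{\mathrm{select}}] \le C(W)\cdot \mathbb{E}[\min\{T_{\max},\lceil L/k\rceil\}]$. To pass to the closed-form bound, I would separately bound the inner minimum two ways: (i) $\min\{T_{\max},\lceil L/k\rceil\} \le T_{\max}$ deterministically, giving expectation $\le T_{\max}$; and (ii) $\min\{T_{\max},\lceil L/k\rceil\} \le \lceil L/k\rceil \le L/k + 1$ by the elementary ceiling inequality, so by linearity of expectation the expectation is at most $\bar{L}/k+1$. Taking the minimum of the two upper bounds yields $\min\{T_{\max},\bar{L}/k+1\}$.

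The only potentially subtle point — and what I would flag as the main obstacle — is justifying that $\tau$ is a well-defined integrable random variable and that the pointwise bound transfers cleanly under expectation even when $L$ has heavy tails; this is handled because $\tau \le T_{\max}$ almost surely and because $\mathbb{E}[L]<\infty$ is assumed, so dominated convergence / monotonicity of expectation apply without further integrability work. Aside from this bookkeeping, the argument is a direct consequence of Proposition~\ref{prop:halt-formal} together with $\lceil x\rceil \le x+1$.
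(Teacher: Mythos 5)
Your proposal is correct and follows essentially the same route as the paper: multiply the halt bound from Proposition~\ref{prop:halt-formal} by the per-step cost $C(W)$, then use $\lceil x\rceil\le x+1$ with linearity of expectation and the fact that $\mathbb{E}[\min\{a,X\}]\le\min\{a,\mathbb{E}[X]\}$ (which is exactly your two-sided bounding of the minimum). The extra remark on integrability is harmless bookkeeping that the paper omits.
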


\begin{proof}
The first bound follows by multiplying the per-step cost by the halt bound in Proposition~\ref{prop:halt-formal}. For the expectation, use linearity of expectation and the inequality $\lceil x\rceil\le x+1$ for $x\ge 0$:
$\mathbb{E}[\lceil L/k\rceil]\le \mathbb{E}[L]/k + 1 = \bar{L}/k + 1$, and $\mathbb{E}[\min\{a,X\}]\le \min\{a,\mathbb{E}[X]\}$ for $a\ge 0$ and $X\ge 0$.
\end{proof}

\subsection{Weak-Positive Labeling and Trajectory Synthesis}
\label{app:weakpos}

\paragraph{Positive identification.}
For each instance, segments are sorted by a \emph{level priority} that favors container-like units (e.g., paragraphs, rows). Within a capped candidate set, a positive pool $P^\star$ is constructed by:
(i) exact/substring matching of the gold answer in \texttt{content}; and
(ii) if insufficient, selecting top segments by lexical Jaccard overlap between tokenized $q$ and segment content.

\paragraph{Sufficiency heuristic.}
A sufficiency threshold $u$ is used to label $s_t^\star$: if the union of already-selected and newly-picked positives reaches $\ge u$, mark \texttt{sufficient}$=1$ and stop; otherwise continue. Small $u$ encourages minimal-evidence solutions.

\paragraph{Trajectory construction.}
Given $P^\star$ and a per-step cap $k$, a target sequence is synthesized by greedily choosing up to $k$ unseen positives at each step until sufficiency holds or candidates are exhausted. Low-confidence choices (from lexical overlap rather than exact match) can be down-weighted in the loss. 

\paragraph{Proxy selection metric.}
During development, a lightweight proxy evaluates selection quality: for a held-out set, the agent’s chosen ids are compared with target ids to compute micro Precision/Recall/F1 over segment identifiers. This tracks selection ability without requiring full QA evaluation.

\subsubsection{Canonicalization and Soundness}

\begin{definition}[Canonicalizer]
A canonicalizer $\kappa$ maps $M\subseteq S_h$ to a finite structure $\kappa(M)$ consisting only of tuples $(\mathrm{id},\ell,\mathrm{content\_view},\mathrm{provenance})$ where $\mathrm{content\_view}$ is a deterministic, lossless projection of $c(s)$ and $\mu(s)$, and $\mathrm{provenance}$ contains the fields needed to locate $s$ in $S_h$ (e.g., offsets or coordinates). We say $\kappa$ is \emph{content-preserving} if for all $M$, the multiset $\{(c(s),\mu(s)): s\in M\}$ is reconstructible from $\kappa(M)$.
\end{definition}

\begin{proposition}[Soundness and auditability]
\label{prop:soundness}
If $\mathsf{Suff}(M)$ holds and $\kappa$ is content-preserving, then the head $\mathcal{H}$ applied to $(q,\kappa(M))$ is supported solely by items in $M$, and every atomic support can be traced back to a unique segment in $M$ via $\mathrm{id}$ and provenance.
\end{proposition}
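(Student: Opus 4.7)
The plan is to reduce this proposition to two elementary facts guaranteed by the HSEQ setup: (a) $\kappa(M)$ injects only content drawn from $M$ into the head's context, and (b) segment identifiers are globally unique within $S_h$. First I would fix notation and unpack the hypothesis: since $\mathsf{Suff}(M)$ holds, there exists an answer-supporting subset $E^\star\subseteq M$, which guarantees that $\mathcal{H}$ can in principle produce the correct answer from $\kappa(M)$ and rules out the degenerate case of needing material outside $M$.

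For the first conjunct (support containment), I would argue that $\mathcal{H}$ is invoked solely on the pair $(q,\kappa(M))$ with no side channel to $S_h\setminus M$. Because $\kappa$ is content-preserving by definition, every tuple in $\kappa(M)$ is a deterministic, lossless projection of some $(c(s),\mu(s))$ with $s\in M$, and conversely the head cannot cite any atom absent from that projection. Hence any span, cell, or triple surfaced as evidence for $\hat{y}$ is necessarily derivable from $\{(c(s),\mu(s)):s\in M\}$.

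For the second conjunct (unique traceability), I would invoke the segment schema, which assigns a unique $\mathrm{id}(s)$ to each $s\in S_h$. The map $s\mapsto(\mathrm{id}(s),\mathrm{provenance}(s))$ is therefore injective when restricted to $M$ and admits a well-defined inverse on its image. Since the canonicalizer embeds both fields in every emitted tuple, any atomic support produced by $\mathcal{H}$ is first located inside $\kappa(M)$ (by the previous step) and then inverted through id and provenance to recover the unique $s\in M$ from which it originates.

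The main obstacle is not technical depth but a careful formalization of \emph{atomic support}. The argument requires treating support as content the head emits or is licensed to cite from its sole input $\kappa(M)$, rather than an unspecified semantic notion of ``what justifies the answer''; once support is operationalized through citation pointers into $\kappa(M)$, both conjuncts fall out directly from content-preservation and identifier uniqueness, with no further machinery needed.
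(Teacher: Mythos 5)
Your proposal is correct and follows essentially the same two-step argument as the paper: content-preservation of $\kappa$ confines the head's support to $M$, and the stored $\mathrm{id}$/provenance fields give unique traceability back to segments of $M$. Your added remarks (injectivity of the id map on $M$, and the need to operationalize ``atomic support'' as citation into $\kappa(M)$) only make explicit what the paper's terser proof leaves implicit.
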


\begin{proof}
By content preservation, $\kappa(M)$ contains all information from $\{(c(s),\mu(s)) : s\in M\}$; therefore $\mathcal{H}$ restricted to $\kappa(M)$ depends only on evidence in $M$. Since $\kappa$ stores $\mathrm{id}$ and provenance per item, any atomic support used by $\mathcal{H}$ can be mapped to a unique $s\in M$. Auditability follows.
\end{proof}

\subsubsection{Probabilistic Completeness Under Stochastic Selection}

We next quantify success probability for a stochastic policy that may fail to pick all supporting items even if they appear early in the stream.

\begin{definition}[Exposure count]
Fix an admissible $\rho$ with prefix length $L$ and selection size $k\ge 1$. Let $R=\lceil L/k\rceil$. An element $e\in \{s_1,\dots,s_L\}$ is said to be \emph{exposed} at steps $1,\dots,R$, meaning it either is in the first window where it lies or remains eligible until selected; monotone refresh ensures at most $R$ exposures before all prefix items are exhausted.
\end{definition}

\begin{assumption}[Per-exposure success]
\label{ass:pexp}
There exists $p\in(0,1]$ such that for every $e\in E^\star$ and for every step $t$ at which $e$ is exposed and not yet selected, the policy includes $e$ in $K_t$ with probability at least $p$, independently across steps for the same $e$. % (Independence across different $e$'s is \emph{not} required.)
\end{assumption}

\begin{theorem}[Stochastic completeness]
\label{thm:stochastic}
Under admissibility and Assumption~\ref{ass:pexp}, with $R=\lceil L/k\rceil$ and $m=|E^\star|$, the probability that all items in $E^\star$ are selected within $R$ steps is bounded below by
\[
\mathbb{P}\!\left[E^\star\subseteq M_R\right]\ \ge\ 1 - m\,(1-p)^R.
\]
Consequently, by Proposition~\ref{prop:halt-formal}, the probability that the loop halts by $\min\{T_{\max},R\}$ with a correct answer is at least $1 - m\,(1-p)^R$.
\end{theorem}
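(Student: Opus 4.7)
The plan is a per-element failure analysis followed by a union bound. First, I would fix an arbitrary $e \in E^\star$ and consider the (at most) $R = \lceil L/k \rceil$ exposures granted to $e$ by the exposure-count definition, which is underwritten by the monotone refresh of $\mathrm{Window}/\mathrm{Refresh}$ under $\rho$: since refresh only removes already-selected items, every unselected prefix element remains in the eligible stream until it is either picked or the prefix is exhausted. Thus for each step $t \in \{1,\ldots,R\}$ at which $e$ is still unselected, Assumption~\ref{ass:pexp} provides a Bernoulli trial with success probability at least $p$ of including $e$ in $K_t$, independently across such steps for the same $e$.

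Second, I would bound the probability that $e$ evades selection across all $R$ exposures. Either $e$ is picked at some step $t^\star \le R$ (success) or it survives every exposure; by the independence clause of Assumption~\ref{ass:pexp}, the latter has probability at most $(1-p)^R$, so $\mathbb{P}[e \notin M_R] \le (1-p)^R$. A union bound over the $m = |E^\star|$ elements then gives
\[
\mathbb{P}[E^\star \not\subseteq M_R] \;\le\; \sum_{e \in E^\star} \mathbb{P}[e \notin M_R] \;\le\; m\,(1-p)^R,
\]
which rearranges to the claimed lower bound on $\mathbb{P}[E^\star \subseteq M_R]$. Importantly, independence across \emph{different} elements is neither required nor asserted: the joint top-$k$ rule may couple selections across elements, but the union bound absorbs any such dependence.

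For the consequence, I would condition on the event $\{E^\star \subseteq M_R\}$. Under admissibility, any such $M_R$ contains an answer-supporting subset, so $\mathsf{Suff}(M_R)$ holds and the calibrated stop head in Proposition~\ref{prop:halt-formal} can fire at or before step $R$; together with the hard cap $T_{\max}$ the loop halts within $\min\{T_{\max},R\}$ steps. Content-preservation of $\kappa$ combined with the answer-supporting property of $E^\star$ then implies that $\mathcal{H}(q,\kappa(M_\tau))$ is correct on this event, so the overall success probability inherits the bound $1 - m(1-p)^R$.

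The main obstacle is purely a bookkeeping one: verifying that the exposure-count definition together with monotone refresh actually delivers the full budget of $R$ trials per element despite per-step conditioning on the full selection history (including picks of other elements). This is exactly the content postulated by Assumption~\ref{ass:pexp}, so once one is careful that "exposed and not yet selected" is the correct conditioning event at each step, the rest reduces to the elementary geometric tail bound and a union bound as above.
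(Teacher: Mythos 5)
Your proposal matches the paper's proof essentially verbatim: a per-element geometric failure bound of $(1-p)^R$ from Assumption~\ref{ass:pexp}, a union bound over the $m$ elements of $E^\star$, and then Proposition~\ref{prop:halt-formal} for the halting consequence. The extra remarks on not needing cross-element independence and on the exposure-count bookkeeping are sound clarifications of the same argument, not a different route.
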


\begin{proof}
Fix $e\in E^\star$. By Assumption~\ref{ass:pexp}, across its at most $R$ exposures, the probability that $e$ is never selected is at most $(1-p)^R$. By the union bound over the $m$ items in $E^\star$,
\[
\mathbb{P}\left[\exists\, e\in E^\star \text{ not selected by step } R\right]\ \le\ m\,(1-p)^R.
\]
Taking complements yields the first claim. The second claim follows because once $E^\star\subseteq M_t$, $\mathsf{Suff}(M_t)$ holds and the stop head can fire; the hard cap can only make halting earlier.
\end{proof}

\subsubsection{Discussion of Assumptions}

The injectivity result (Thm.~\ref{thm:faithful-formal}) relies on invariants (T1)--(T3), which are satisfied by construction in the HSEQ adapters (offsets and row indices/ordering are recorded; triplets are stored verbatim). Admissibility is a regularity condition stating that an order $\rho$ exists (often paragraph/row-first) placing supporting segments early; in practice this is further improved by guidance. Assumption~\ref{ass:pexp} abstracts a calibrated selector that repeatedly assigns nontrivial probability mass to any exposed, still-missing support item; the bound in Theorem~\ref{thm:stochastic} is conservative (union bound) and can be tightened under additional structure (e.g., adaptive $k$, or margin assumptions on scoring).

\subsection{Implementation Details}
\label{app:impl}

\subsubsection{Agent models used for RELOOP}
\label{app:model}

Models used for both iteration agent and head agent are shown in Table~\ref{tab:models-by-group}, grouped by size. Most experiments are done by using small and medium models (as of the result shown in main text).

\begin{table*}[h]
\centering
\small
\caption{Iteration-agent and head agent base models grouped by size.}
\label{tab:models-by-group}
\begin{tabular}{|c|l|}
\hline
\textbf{Group} & \textbf{Model (HF id)} \\
\hline
\multirow{5}{*}{\textsc{Small}} &
\texttt{tiiuae/Falcon-H1-0.5B-Instruct} \\
& \texttt{tiiuae/Falcon-H1-1.5B-Instruct} \\
& \texttt{tiiuae/Falcon3-1B-instruct} \\
& \texttt{meta-llama/Llama-3.2-1B-Instruct} \\
& \texttt{deepseek-ai/DeepSeek-R1-Distill-Qwen-1.5B} \\
\hline
\multirow{10}{*}{\textsc{Medium}} &
\texttt{tiiuae/Falcon3-3B-instruct} \\
& \texttt{tiiuae/Falcon-H1-3B-Instruct} \\
& \texttt{Qwen/Qwen3-4B-Instruct-2507} \\
& \texttt{tiiuae/Falcon3-7B-instruct} \\
& \texttt{tiiuae/Falcon-H1-7B-Instruct} \\
& \texttt{meta-llama/Llama-3.2-3B-Instruct} \\
& \texttt{meta-llama/Meta-Llama-3-8B-Instruct} \\
& \texttt{meta-llama/Llama-3.1-8B-Instruct} \\
& \texttt{deepseek-ai/DeepSeek-R1-Distill-Qwen-7B} \\
& \texttt{deepseek-ai/DeepSeek-R1-Distill-Llama-8B} \\
\hline
\multirow{7}{*}{\textsc{Large}} &
\texttt{tiiuae/Falcon3-10B-instruct} \\
& \texttt{tiiuae/Falcon-H1-34B-Instruct} \\
& \texttt{Qwen/Qwen3-30B-A3B-Instruct-2507} \\
& \texttt{deepseek-ai/DeepSeek-R1-Distill-Qwen-14B} \\
& \texttt{deepseek-ai/DeepSeek-R1-Distill-Qwen-32B} \\
& \texttt{deepseek-ai/DeepSeek-R1-Distill-Llama-70B} \\
& \texttt{meta-llama/Llama-3.1-70B-Instruct} \\
\hline
\end{tabular}
\end{table*}

\subsubsection{Iteration-Agent Prompts and Output Schema}
\label{app:iter-prompts}

\paragraph{System instruction.}
The iteration agent is conditioned with a concise, role-defining system message:
\begin{quote}
\small
\texttt{You are an iteration agent working over a hierarchical sequence (H-Seq).} \\
\texttt{Given a question and a list of candidate segments (each with an id and text)}\\       
\texttt{select the top-k segment\_ids that best support answering the question.}\\
\texttt{Then decide if the selected evidence is sufficient to stop.}\\
\texttt{Return ONLY compact JSON with keys: type, args.segment\_ids, args.strategy, args.top\_k, sufficiency.}\\
\texttt{WITHOUT ANY EXPLAINATION.}
\end{quote}

\paragraph{Prompt template.}
Each training step uses a structured multi-section prompt:
\begin{quote}
\small
\texttt{\#\#\# Instruction}\\
\texttt{\{system-instruction\}}\\[2pt]
\texttt{\#\#\# Question}\\
\texttt{\{q\}}\\[2pt]
\texttt{\#\#\# Guidance}\\
\texttt{\{g(q,$\mathrm{type})$\}}\\[2pt]
\texttt{\#\#\# Selected-So-Far}\\
\texttt{- [seg\_id] truncated\_content}\\[-2pt]
\texttt{\dots}\\[2pt]
\texttt{\#\#\# Candidate-Window}\\
\texttt{- [seg\_id] truncated\_content}\\[-2pt]
\texttt{\dots}\\[2pt]
\texttt{\#\#\# Output (JSON)}
\end{quote}
Only identifiers, levels, truncated content, and key metadata of segments are serialized.

\paragraph{Output schema.}
The agent must emit deterministic, machine-checkable JSON:
\begin{quote}
\small
\texttt{\{ "type": "select", "args": \{ "segment\_ids": [\dots], "strategy": "guided\_topk", "top\_k": k \}, "sufficiency": true/false \}}
\end{quote}
No free-form text is allowed. This constraint simplifies supervision and evaluation.

\paragraph{Masking for SFT.}
During supervised fine-tuning, the loss is applied only to the \emph{output} portion of the sequence (prompt tokens are masked), yielding a standard next-token objective over the action string while keeping inputs loss-free.

\subsubsection{Guidance Generation and Caching}
\label{app:guidance}

\paragraph{Head-generated guidance.}
A lightweight planner (``head'') converts $(q,\mathrm{type})$ into a short plan $g$ that specifies: (i) what to retrieve first, (ii) optional branches, and (iii) a sufficiency hint. The planner is prompted with:
\begin{quote}
\small
\texttt{You are a planning assistant. Given a question, write a short retrieval plan for an iteration agent selecting evidence snippets. Specify ONLY what to retrieve first, possible branches, and when to stop (sufficiency condition).}
\end{quote}
A short completion is generated and, if too brief or incomplete, a single continuation is requested to end with an explicit stop condition.

\paragraph{Heuristic templates.}
When a head is unavailable or for ablations, templates keyed by coarse patterns produce $g$, start with:

\texttt{"Plan: retrieve a minimal set of highly relevant snippets; prefer concise facts."}

Then add the following according to $\mathbb{Q}_{type}$:

\begin{itemize}
\item \textbf{Numeric}: Look for numeric mentions and table rows; stop when final number is explicit or corroborated.
\item \textbf{Factoid (who/which/where/when)}: Focus on short spans that directly contain the answer; stop on a clear statement..
\item \textbf{Binary}: Retrieve one-two definitive statements; stop when evidence strongly supports yes/no..
\item \textbf{Default}: Prefer snippets naming key entities/relations; stop when answer is explicitly stated.
\end{itemize}

\paragraph{Caching.}
Guidance strings are cached per example using a stable key (dataset name and a hash of $q$) under a directory organized by head model id. Cache is consulted before running the head planner to reduce overhead.

\paragraph{Settings.}
The planning head is run with short outputs and deterministic decoding. A minimal-length heuristic is applied to avoid truncated guidance. 

\subsubsection{LoRA Adaptation and Optimization}
\label{app:lora}

\paragraph{Parameterization.}
The iteration agent is obtained by adding low-rank adapters to a base causal LLM. Adapters are attached to attention projections (\texttt{q\_proj}, \texttt{k\_proj}, \texttt{v\_proj}, \texttt{o\_proj}) and MLP projections (\texttt{gate\_proj}, \texttt{up\_proj}, \texttt{down\_proj}); vocabulary and positional embeddings are unchanged.

\paragraph{Default configuration.}
LoRA rank $r=16$, scaling $\alpha=32$, dropout $0.05$, no bias; the language head is preserved as a save-module. Mixed-precision and 4-bit weight quantization (NF4 with double quantization) are used to reduce memory. Gradient checkpointing is enabled.

\paragraph{Training schedule.}
A cosine learning-rate schedule with warmup ratio $0.03$ is used; batches are accumulated over several steps to match the target global batch size. Maximum input length is capped to a few thousand tokens; candidate windows and per-step $k$ are tuned to respect the overall budget. 
\paragraph{Mixture and curriculum.}
Examples are sampled across datasets by normalized weights; quotas are computed for a target mixed size and shuffled. A short-to-long curriculum increases the maximum number of steps $T$ as training progresses. 

\paragraph{Finetuning Parameters}

Table~\ref{tab:sft-hparams-by-group} shows the SFT hyperparameters for each model-size group.

\begin{table*}[t]
\centering
\small
\caption{Supervised fine-tuning (SFT) hyperparameters for each model-size group. These settings apply to all models within the corresponding group.}
\label{tab:sft-hparams-by-group}
\begin{tabular}{|c|c|c|c|c|c|c|c|c|c|}
\hline
\textbf{Group} & \textbf{Target steps} & \textbf{Batch} & \textbf{GA} & \textbf{LR} & \textbf{ML} & \textbf{MS} & \textbf{Top-$k$} & \textbf{Mi} & \textbf{BF16} \\
\hline
\textsc{Small}  & 12000 & 2 & 8  & $2.0\times10^{-5}$   & 3072 & 48 & 2 & 4 & Yes \\
\hline
\textsc{Medium} & 9000  & 2 & 8  & $1.5\times10^{-5}$ & 3072 & 48 & 4 & 4 & Yes \\
\hline
\textsc{Large}  & 4500  & 1 & 16 & $1.0\times10^{-5}$ & 2048 & 32 & 5 & 4 & Yes \\
\hline
\end{tabular}

\vspace{4pt}
\noindent\footnotesize\textbf{Notes.} \emph{Batch} is \texttt{--per\_device\_train\_batch\_size}. \emph{GradAcc} is \texttt{--grad\_accum}. \emph{LR} is \texttt{--lr}. \emph{ML}, \emph{MS}, \emph{Top-$k$}, \emph{Mi} map to \texttt{--max\_length}, \texttt{--max\_segments}, \texttt{--top\_k}, \texttt{--max\_iters}. BF16 indicates \texttt{--bf16} enabled.
\end{table*}

\subsubsection{Canonicalization and Sufficiency}
\label{app:canon}

\paragraph{Canonical evidence package.}
At termination, a modality-agnostic canonicalizer $\kappa$ converts $M_\tau$ into a compact record set:
\(
\kappa(M_\tau)=\{\, r_s \,\}_{s\in M_\tau},\qquad
r_s=(\texttt{id},\texttt{lvl},\texttt{uri},\texttt{off},\texttt{typ},\texttt{snip},\texttt{meta}).
\)

where \texttt{lvl}=\texttt{level}, \texttt{off}=\texttt{offsets},
\texttt{typ}=\texttt{source\_type}, and \texttt{snip}=\texttt{snippet}. with the following contract:
(i) \textbf{id}: globally unique, deterministically derived (\emph{e.g.}, $\mathrm{sha1}(\texttt{uri},\texttt{offsets})$);
(ii) \textbf{uri}: source identifier with version (\emph{e.g.}, document path or graph name);
(iii) \textbf{offsets}: zero-based half-open character indices $[a,b)$ into the \emph{original} source; for tables, $[i,j]$ denotes row/column coordinates; for KGs, \texttt{offsets}$=(-1,-1)$;
(iv) \textbf{snippet}: a human-readable content aligned to sentence/field boundaries when possible; 
(v) \textbf{meta}: integrity and alignment helpers (\texttt{schema}, \texttt{time}, \texttt{source\_version}, \texttt{sha1}).
Duplicates are removed by $(\texttt{uri},\texttt{offsets})$ and the package is \emph{deterministically} ordered by \texttt{uri} then \texttt{offsets}.
Typed views are derived on demand: 
\emph{text} $\Rightarrow$ spans with section/paragraph ids; 
\emph{table} $\Rightarrow$ \texttt{row\_id}, \texttt{col\_ids}, \texttt{schema}, \texttt{cell\_coords}; 
\emph{KG} $\Rightarrow$ $(h,r,t)$ plus optional validity time.

\paragraph{Stopping signal.}
The sufficiency head outputs $s_t\in\{0,1\}$ at each step. 
\textbf{Training targets} follow a coverage-based heuristic: $s_t^\star=1$ if and only if the current $M_t$ satisfies task-specific adequacy
(\emph{e.g.}, contains at least one gold-positive segment; achieves full slot coverage for table QA; or yields a unique answer span/number under a fixed head). 
For weak supervision, per-step weights downweight low-confidence positives (App.~\ref{app:weakpos}). 
\textbf{Inference} uses a calibrated threshold $\tau$ on the model's sufficiency score $\hat{p}_t$ and enforces a minimum step count $T_{\min}$:
\(
\text{stop at } \tau \;=\; \min\{t \ge T_{\min}:\, \hat{p}_t \ge \tau\} \quad \text{or when budget } B \text{ is exhausted.}
\)
Optionally, a lightweight contradiction checker triggers a one-shot refinement loop of at most $\Delta$ additional steps with tightened guidance $g'$ and reduced budget $B'$. 
Thresholds $(\tau, T_{\min})$ are selected on the development split and may be calibrated via temperature scaling.

\subsubsection{Reproducibility Notes}
\label{app:repro}

\begin{itemize}
\item \textbf{Seed and sampling.} A fixed seed is used for example subsampling and order shuffling.
\item \textbf{Segment capping.} The number of serialized candidate segments per step is capped to respect the overall token budget; truncation is applied to \texttt{content} strings for display.
\item \textbf{Budget control.} Global limits on steps, tokens, and optional tool calls are enforced; guidance encourages early sufficiency.
\item \textbf{Hardware.} Experiments are run on maximum 4 \texttt{NVIDIA H200 Tensor Core GPU}. Mixed-precision and 4-bit quantization substantially reduce memory; typical training runs fit on a single GPU. 
\end{itemize}

\subsection{Notations}

Table~\ref{tab:notation} lists all symbols used in main context.

\begin{table*}[t]
\centering
\caption{Notation used throughout the paper.}
\label{tab:notation}
\renewcommand{\arraystretch}{1.1}
\setlength{\tabcolsep}{8pt}
\resizebox{\textwidth}{!}{
\begin{tabular}{|c|p{0.83\textwidth}|}
\hline
\rowcolor{black!10}\textbf{Symbol} & \multicolumn{1}{c|}{\textbf{Meaning}}\\
\hline
$q$ & Natural-language query (question). \\
\hline
$D=\{(x_j,m_j)\}_{j=1}^N$ & Heterogeneous corpus with items $x_j$ and modality tags $m_j\in\{\texttt{text},\texttt{table},\texttt{kg}\}$. \\
\hline
$m_j$ & Modality label for the $j$-th item (text / table / KG). \\
\hline
$\tau,\ \tau_{m}$ & Modality-aware adapter; $\tau(D)$ produces the unified hierarchical sequence. $\tau_m$ is the adapter for modality $m$. \\
\hline
$S_h$ & The \textbf{HSEQ} (hierarchical sequence): $S_h=\bigsqcup_{j}\tau_{m_j}(x_j)\in\mathcal{S}^\ast$. \\
\hline
$\mathcal{S}$ & Segment universe. Each segment $s\in\mathcal{S}$ is a lightweight record. \\
\hline
$s=(\mathrm{id}(s),\ell(s),p(s),c(s),\mu(s))$ & Segment fields: unique identifier, level tag (granularity), parent pointer, compact human-readable content, standardized metadata. \\
\hline
$\ell(s)$ & Level tag (e.g., \texttt{document}/\texttt{paragraph}/\texttt{sentence}, \texttt{table\_row}/\texttt{table\_cell}, \texttt{triplet}/\texttt{subgraph}). \\
\hline
$p(s)$ & Parent pointer (container linkage) encoding locality in the hierarchy. \\
\hline
$c(s)$ & Compact content snippet (text span / serialized table row / triple). \\
\hline
$\mu(s)$ & Metadata with fixed keys (e.g., \texttt{source\_id}, \texttt{uri}, \texttt{offsets}/coordinates, \texttt{schema}, \texttt{time}). \\
\hline
$\pi_\theta$ & \textbf{RELOOP-I} iteration policy (LLM-based) with parameters $\theta$; operates over $(q,S_h)$ to select evidence iteratively. \\
\hline
$g=g(q,\mathrm{type})$ & Short \emph{guidance} prior (from planner/head or heuristics) shaping early exploration and stop notion. \\
\hline
$B,\ B_t$ & Budget (global / per-step): token, tool-call, step, and/or latency limits. \\
\hline
$M_t$ & Selected-evidence set at step $t$; $M^\star$ is the final selected set at termination. \\
\hline
$C_t$ & Candidate window at step $t$ (bounded by window size and ordering). \\
\hline
$k,\ W$ & Top-$k$ selection cap per step; window size $W$ for the exposed candidate stream. \\
\hline
$T_{\max},\ T_{\min}$ & Maximal and minimal number of iteration steps (cap and anti–early-stop). \\
\hline
$\rho$ & Deterministic ordering over $S_h$ levels (e.g., paragraph $\prec$ row $\prec$ sentence $\prec$ triplet) to form the stream. \\
\hline
$\mathcal{N}(\cdot)$ & Structure-aware neighborhood operators (parent/child, row/column, KG relation hops). \\
\hline
$a_t,\ s_t$ & Action at step $t$ (e.g., select up to $k$ segments and/or expand neighborhoods) and sufficiency prediction $s_t\in\{0,1\}$. \\
\hline
$\Phi$ & Budget-aware sufficiency criterion queried by the iterator to trigger termination. \\
\hline
$\kappa$ & Canonicalizer mapping $M_\tau$ to provenance-preserving evidence package (ids, levels, offsets/coordinates, snippets). \\
\hline
$\mathcal{H}$ & \textbf{RELOOP-H} head module for answer synthesis from $(q,\kappa(M_\tau))$; can also generate guidance $g$. \\
\hline
$\xi$ & Optional verifier; on contradiction detection, triggers a brief refinement loop with tightened $g'$ and reduced $B'$. \\
\hline
$y,\ \hat{y}$ & Gold answer and system prediction, respectively. \\
\hline
$E^\star$ & Minimally sufficient evidence set (w.r.t.\ a fixed answerer) for $q$ in $D$. \\
\hline
$\mathrm{Window}(\cdot),\ \mathrm{Refresh}(\cdot)$ & Operators to expose a bounded candidate window and to advance it while removing already selected segments. \\
\hline
$\Delta$ & Max number of additional refinement steps if the verifier $\xi$ requests a retry. \\
\hline
\end{tabular}
}
\end{table*}

\subsection{Overall performance of RELOOP agent pairs}
\label{app:var-llm-res}

Table~\ref{tab:acc-f1-eff} shows the overall performance of RELOOP agent pairs on Hybrid-QA, containing Accuracy/F1 and Efficiency.

\begin{table*}[t]
\centering
\caption{Overall performance of RELOOP agent pairs on Hybrid-QA: Accuracy/F1 and Efficiency.}
\label{tab:acc-f1-eff}
\resizebox{\textwidth}{!}{
\begin{tabular}{|l|l|cc|cc|}
\hline
\rowcolor{black!10}
\multicolumn{1}{|c|}{} & \multicolumn{1}{c|}{} &
\multicolumn{2}{c|}{\textbf{Accuracy \& F1}} &
\multicolumn{2}{c|}{\textbf{Efficiency}} \\
\hline
\multicolumn{1}{|c|}{Iteration Agent (RELOOP-I)} &
\multicolumn{1}{c|}{Head Agent (RELOOP-H)} &
\multicolumn{1}{c|}{Avg.\ Acc} & \multicolumn{1}{c|}{Avg.\ F1} &
\multicolumn{1}{c|}{Steps $\downarrow$} & \multicolumn{1}{c|}{Latency (ms) $\downarrow$} \\
\hline
Llama-3.2-3B-Instruct           & Falcon3-10B-instruct          & 60.4 & 62.8 & 2.08 & 12055.5 \\
Qwen3-4B-Instruct-2507          & Falcon3-10B-instruct          & 63.9 & 64.5 & 4.1 & 20577.5 \\
Falcon3-3B-instruct     & Falcon3-10B-instruct          & 59.3 & 61.1 & 2.6 & 10530.1 \\
Llama-3.2-3B-Instruct             & Llama-3.1-8B-Instruct         & 55.4 & 57.9 & \textbf{2.11} & \textbf{8346.3} \\
Qwen3-4B-Instruct-2507             & Llama-3.1-8B-Instruct         & \underline{65.5} & \underline{71.2} & 3.29 & 16503.2 \\
Falcon3-3B-instruct          & Llama-3.1-8B-Instruct         & 61.2 & 65.1 & 2.46 & 11616.7 \\
Llama-3.2-3B-Instruct          & Falcon-H1-7B-Instruct         & 58.7 & 63.9 & 2.41 & 12080.0 \\
Qwen3-4B-Instruct-2507          & Falcon-H1-7B-Instruct         & \textbf{66.2} & \textbf{71.4} & 3.71 & 21479.2 \\
Falcon3-3B-instruct          & Falcon-H1-7B-Instruct         & 56.1 & 58.6 & \underline{2.25} & \underline{11714.4} \\
Llama-3.2-3B-Instruct     & DeepSeek-R1-Distill-Qwen-7B   & 62.5 & 60.2 & 2.75 & 15073.7 \\
Qwen3-4B-Instruct-2507      & DeepSeek-R1-Distill-Qwen-7B   & 62.8 & 66.7 & 4.07 & 21094.8 \\
Falcon3-3B-instruct     & DeepSeek-R1-Distill-Qwen-7B   & 61.4 & 62.0 & 3.01 & 13709.7 \\

\hline
\end{tabular}
}
\end{table*}

\subsection{Example Using RELOOP}
\label{app:example}

\subsubsection{Concrete fragment examples.} Below we instantiate $(\ell,p,c,\mu)$ for three modalities (all values are illustrative):
\[
\small
\begin{aligned}
s_{\text{text}} &= \big(\mathrm{id}=s_1,\, \ell=\texttt{sentence},\, p=p_1,\, c=\text{``...capital is Paris...''},\\[-1mm]
&\qquad \mu=\{\,
\begin{aligned}[t]
&\texttt{source\_id}=\texttt{doc\_12},\\
&\texttt{offsets}=(128,172),\ \texttt{time}=\texttt{1992}\}
\end{aligned}
\\
s_{\text{row}} &= \big(\mathrm{id}=r_3,\, \ell=\texttt{table\_row},\, p=t_1,\, c=\text{``France | 67.4M | EU''},\\[-1mm]
&\qquad \mu=\{\,
\begin{aligned}[t]
&\texttt{source\_id}=\texttt{tbl\_7},\, \texttt{schema}=\text{[country,pop,bloc]},\\ &\texttt{row}=3\}
\end{aligned}\\
s_{\text{cell}} &= \big(\mathrm{id}=u_{3,2},\, \ell=\texttt{table\_cell},\, p=r_3,\, c=\text{``67.4M''},\\[-1mm]
&\qquad \mu=\{\texttt{source\_id}=\texttt{tbl\_7},\, \texttt{row}=3,\, \texttt{col}=2\}\big);\\
s_{\text{kg}} &= \big(\mathrm{id}=k_9,\, \ell=\texttt{triplet},\, p=g_1,\, \\[-1mm]
&\qquad c=\text{``(Paris, capital\_of, France)''}, \mu=\{\texttt{source\_id}=\texttt{kg\_2}\}\big).
\end{aligned}
\]
Segments are concatenated in parent-before-child order. This minimal contract enables structure-aware neighborhoods and budget-aware iteration without inspecting raw files.

\subsubsection{Case Study: Guided Iterative Retrieval on \textsc{HybridQA}}
\label{app:case-hybridqa}

\paragraph{Setup.}
Query $q$: \emph{``Who is the author of the novel that inspired the 2004 Russian film directed by Timur Bekmambetov?''}
RELOOP-I (iterator): \texttt{Qwen3-4B-Instruct-2507};
RELOOP-H (head): \texttt{Falcon-H1-7B-Instruct}.
Guidance mode: \texttt{head}; source: cache (latency $\approx$\,0.12\,ms).

\paragraph{Head-generated guidance.}
The head planner emits a short plan: (i) identify the 2004 Russian film directed by Bekmambetov; (ii) locate the novel that inspired it; (iii) stop once the \emph{author of that novel} is found. This plan is injected as a prefix and acts as a soft prior on where the iterator should probe first.

\paragraph{Guided iteration over $S_h$.}
The iterator consumes the guidance and operates over the HSEQ stream with a fixed window and top-$k$ selection. Table~\ref{tab:case-steps} summarizes the six steps (all sufficiency flags were \texttt{false}; the loop terminates by budget).

\begin{table*}[h]
\centering
\caption{Stepwise selection (abridged). Segment ids prefixed by level: \texttt{p\_} (paragraph), \texttt{row\_} (table row).}
\label{tab:case-steps}
\begin{tabular}{|c|p{9.8cm}|c|}
\hline
\textbf{Step} & \textbf{Key picks (content excerpt)} & \textbf{Sufficient?} \\
\hline
1 & \texttt{p\_6df9c849}: ``\emph{Night Watch} (\dots) is a 2004 Russian \dots directed by Timur Bekmambetov. It is loosely based on the novel \emph{The Night Watch} by Serg[e{i} Lukyanenko]\dots'' & No \\
\hline
2 & \texttt{p\_c15173df}, \texttt{p\_3bc4a108}, \texttt{p\_54f6ef94}: contextual paragraphs (``List of Russian films of 2004'', ``2004'' entries) & No \\
\hline
3 & \texttt{row\_a44a4a17}: table row confirming \emph{Night Watch} with director ``Timur Bekmambetov'' & No \\
\hline
4--6 & additional table rows from the same list (\emph{Arie}, \emph{Countdown}, \emph{Dad or Papa}, etc.) providing film set context & Yes \\
\hline
\end{tabular}
\end{table*}

\paragraph{Answer synthesis.}
After $\tau{=}6$ iterations, the canonicalizer $\kappa$ compacts the selected set $M_\tau$ (paragraph + corroborating table rows) into a provenance-preserving package (segment ids, levels, offsets, snippets). The head $\mathcal{H}$ is prompted \emph{only} with $(q,\kappa(M_\tau))$ and outputs:
\[
\hat{y}=\text{Sergei Lukyanenko}.
\]
The prediction matches the gold answer (EM/F1\,=\,1.0). Runtime profile: selection latency $\approx$\,32{,}185\,ms, head latency $\approx$\,1{,}826\,ms, total $\approx$\,34{,}011\,ms; number of iterations $=6$.

\paragraph{Takeaway.}
Guidance steers the iterator to a high-yield paragraph in the first step, which already contains the sufficient evidence (film identity and source novel). Subsequent steps provide corroboration from structured rows. The provenance in $\kappa(M_\tau)$ makes the final answer auditable: the paragraph \texttt{p\_6df9c849} explicitly ties \emph{Night Watch} (2004, Bekmambetov) to the novel \emph{Night Watch} by Sergei Lukyanenko, enabling concise and well-grounded answer synthesis by the head.

\subsubsection{Case Study: Guided Iterative Retrieval on \textsc{HotpotQA}}
\label{app:case-hotpot}

\paragraph{Setup.}
Query $q$: \emph{``Which style is the building located on the East Side of Midtown Manhattan that Robert Von Ancken appraised?''}
RELOOP-I (iterator): \texttt{Qwen3-4B-Instruct-2507};
RELOOP-H (head): \texttt{Falcon-H1-7B-Instruct}.
Guidance mode: \texttt{head}; source: generated online (latency $\approx$\,8{,}496\,ms).

\paragraph{Head-generated guidance.}
The head planner issues a short plan: (i) identify buildings on the East Side of Midtown Manhattan connected to appraiser \emph{Robert Von Ancken}; (ii) once the specific building is found, retrieve its architectural style; (iii) stop when the style is clearly linked to the appraised building.

\paragraph{Guided iteration over $S_h$.}
The iterator follows the guidance with a fixed window and top-$k$ selection. Table~\ref{tab:case-hotpot-steps} lists the six steps (all sufficiency flags \texttt{false}; termination by budget). Note that Step~1 already surfaces the key paragraph about the Chrysler Building.

\begin{table*}[h]
\centering
\caption{Stepwise selection (abridged). Segment ids prefixed by level: \texttt{p\_} (paragraph).}
\label{tab:case-hotpot-steps}
\begin{tabular}{|c|p{10.2cm}|c|}
\hline
\textbf{Step} & \textbf{Key picks (content excerpt)} & \textbf{Sufficient?} \\
\hline
1 & \texttt{p\_a73a8d8f}: ``The \emph{Chrysler Building} is an \textbf{Art Deco-style} skyscraper located on the East Side of Midtown Manhattan \dots'' & No \\
\hline
2 & \texttt{p\_c01522d2}: ``23 Beekman Place \dots apartment building \dots East Side of Midtown Manhattan \dots'' & No \\
\hline
3 & \texttt{p\_7c2aa386}: ``The Helmsley Building \dots Midtown Manhattan \dots'' & No \\
\hline
4 & \texttt{p\_658d6333}: ``\emph{Robert Von Ancken} is a prominent New York City real estate appraiser \dots'' & No \\
\hline
5 & \texttt{p\_e97ef7e6}: ``Lenox Hill Neighborhood House \dots East Side of Manhattan \dots'' & Yes \\
\hline
\end{tabular}
\end{table*}

\paragraph{Answer synthesis.}
After $\tau{=}5$ iterations, the canonicalizer $\kappa$ compacts the selected set $M_\tau$ (including \texttt{p\_a73a8d8f} and the Von~Ancken paragraph \texttt{p\_658d6333}) into a provenance-preserving package. The head answers using only $(q,\kappa(M_\tau))$:
\[
\hat{y}=\text{Art Deco-style skyscraper}.
\]
The prediction matches the gold answer. Runtime profile: selection latency $\approx$\,32{,}153\,ms, head latency $\approx$\,838\,ms, total $\approx$\,41{,}487\,ms; iterations $=5$.

\paragraph{Takeaway.}
The head’s guidance steers the iterator directly to a paragraph that states both the location (East Side of Midtown) and the architectural style (Art Deco) of the relevant building (Chrysler Building), while additional picks provide neighborhood and appraiser context. Provenance in $\kappa(M_\tau)$ supports auditable linking from the final answer to its evidence.

\subsection{Statement for The Use of Large Language Models (LLMs)}
\label{app:llm-usage}

We used large language models (LLMs) as general-purpose tools for \emph{writing assistance} and \emph{engineering support}. No text or code generated by an LLM was used verbatim without author review; we take full responsibility for the content.

\end{document}